\documentclass[twoside,11pt]{article}
\usepackage{jmlr2e}
\usepackage{graphicx} 
\usepackage{subfigure}
\usepackage[colorlinks=false,allbordercolors={1 1 1}]{hyperref}

\usepackage{natbib}

\usepackage{algorithm, algorithmic}

\usepackage{times, amsfonts, amsmath, amssymb, latexsym, color, graphics, enumerate, amstext,url, epsfig, subfigure}



\jmlrheading{15}{2014}{2213-2238}{12/11; Revised 2/13}{6/14}{Yudong Chen, Ali Jalali, Sujay Sanghavi and Huan Xu}


\ShortHeadings{Clustering Partially Observed Graphs via Convex Optimization}{Chen, Jalali, Sanghavi and Xu}
\firstpageno{2213}


\newcommand{\mat}[1]{{\mathbf #1}}

\newcommand{\M}{\mat{M}}

\newcommand{\A}{\mat{A}}
\newcommand{\I}{\mat{I}}
\newcommand{\B}{\mat{B}}
\newcommand{\U}{\mat{U}}
\newcommand{\K}{\mat{K}}

\newcommand{\X}{\mat{X}}
\newcommand{\Y}{\mat{Y}}

\renewcommand{\P}{\mathbb{P}}
\newcommand{\R}{\mathbb{R}}
\renewcommand{\S}{\mat{S}}
\newcommand{\T}{{\mathcal T}}

\newcommand{\ignore}[1]{}

\newcommand{\PP}{\mathcal{P}}
\newcommand{\II}{\mathcal{I}}
\newcommand{\RR}{{\mathcal R}}

\newcommand{\OmegaO}{ \Omega_{\textrm{obs}} }

\newcommand{\Gammak}{ \Gamma_{k} }
\newcommand{\Gammai}{ \Gamma_{i} }

\newcommand{\W}{ \mat{W}}

\newcommand{\Kmin}{ K_{\min}}
\newcommand{\EE}{\PP_\Omega(\S)}
\newcommand{\llambda}{\lambda}

\begin{document}

\title{Clustering Partially Observed Graphs via Convex Optimization}

\author{\name Yudong Chen \email ydchen@utexas.edu\\
      \name Ali Jalali \email alij@mail.utexas.edu\\
      \name Sujay Sanghavi \email sanghavi@mail.utexas.edu \\
       \addr Department of Electrical and Computer Engineering\\
       The University of Texas at Austin\\
       Austin, TX 78712, USA
       \AND
       \name Huan Xu \email mpexuh@nus.edu.sg \\
       \addr Department of Mechanical Engineering\\
       National University of Singapore\\
       Singapore 117575, SINGAPORE}

\editor{Marina Meila}

\maketitle

\begin{abstract}%
This paper considers the problem of clustering a partially observed unweighted graph---i.e., one where for some node pairs we know there is an edge between them, for some others we know there is no edge, and for the remaining we do not know whether or not there is an edge. We want to organize the nodes into disjoint clusters so that there is relatively dense (observed) connectivity within clusters, and sparse across clusters.

We take a novel yet natural approach to this problem, by focusing on finding the clustering that minimizes the number of ``disagreements''---i.e., the sum of the number of (observed) missing edges within clusters, and (observed) present edges across clusters. Our algorithm uses convex optimization; its basis is a reduction of disagreement minimization to the problem of recovering an (unknown) low-rank matrix and an (unknown) sparse matrix from their partially observed sum. We evaluate the performance of our algorithm on the  classical Planted Partition/Stochastic Block Model. Our main theorem provides sufficient conditions for the success of our algorithm as a function of the minimum cluster size, edge density and observation probability; in particular, the results characterize the tradeoff between the observation probability and the edge density gap. When there are a constant number of clusters of equal size, our results are optimal up to logarithmic factors.

\end{abstract}

\begin{keywords}
graph clustering, convex optimization, sparse and low-rank decomposition
\end{keywords}

\section{Introduction} \label{sec:intro}

This paper is about the following task: given partial observation of an undirected unweighted graph, partition the nodes into disjoint clusters so that there are dense connections within clusters, and sparse connections across clusters. By partial observation, we mean that for some node pairs we know if there is an edge or not, and for the other node pairs we do not know---these pairs are {\em unobserved}. This problem arises in several fields across science and engineering. For example, in sponsored search, each cluster is a submarket that represents a specific group of advertisers that do most of their spending on a group of query phrases---see e.g.,~\cite{yaho} for such a project at Yahoo. In VLSI and design automation, it is useful in minimizing signaling between components~\citep{kernighan_lin}. In social networks, clusters may represent groups of people with similar interest or background; finding clusters enables better recommendations and link prediction~\citep{Social1}. In the analysis of document databases, clustering the citation graph is often an essential and informative first step~\citep{db1}. In this paper, we will focus not on specific application domains, but rather on the basic graph clustering problem itself.

Partially observed graphs appear in many applications. For example, in online social networks like Facebook, we observe an edge/no edge between two users when they accept each other as a friend or explicitly decline a friendship suggestion. For the other user pairs, however, we simply have no friendship information between them, which are thus unobserved. More generally, we have partial observations whenever obtaining similarity data is difficult or expensive (e.g., because it requires human participation). In these applications, it is often the case that \emph{most} pairs are unobserved, which is the regime we are particularly interested in.

As with any clustering problem, we need a precise mathematical definition of the clustering criterion with potentially a guaranteed performance. There is relatively few existing results with provable performance guarantees for graph clustering with partially observed node pairs. Many existing approaches to clustering fully observed graphs either require an additional input (e.g., the number of clusters $k$ required for spectral or $k$-means clustering methods), or do not guarantee the performance of the clustering. We review existing related work in Section~\ref{sec:related}.

\subsection{Our Approach}\label{sec:intro_approach}
We focus on a natural formulation, one that {does not require any other extraneous input} besides the graph itself. It is based on minimizing {\em disagreements}, which we now define. Consider any candidate clustering; this will have {(a)} observed node pairs that are in different clusters, but have an edge between them, and {(b)} observed node pairs that are in the same cluster, but do not have an edge between them. The total number of node pairs of types {(a)} and {(b)} is the number of disagreements between the clustering and the given graph. We focus on the problem of finding the {optimal clustering}---one that minimizes the number of disagreements. Note that we do {\em not} pre-specify the number of clusters. For the special case of fully observed graphs, this formulation is exactly the same as the problem of \emph{correlation clustering}, first proposed by~\cite{banblucha}. They show that exact minimization of the above objective is NP-hard in the worst case---we survey and compare with this and other related work in Section~\ref{sec:related}. As we will see, our approach and results are different.

We aim to achieve the combinatorial disagreement minimization objective using matrix splitting via convex optimization. In particular, as we show in Section~\ref{sec:main_results} below, one can represent the adjacency matrix of the given graph as the sum of an unknown low-rank matrix (corresponding to ``ideal" clusters) and a sparse matrix (corresponding to disagreements from this ``ideal'' in the given graph). Our algorithm either returns a clustering, which is guaranteed to be disagreement minimizing, or returns a ``failure"---it never returns a sub-optimal clustering. For our main analytical result, we evaluate our algorithm's performance on the natural and classical \emph{planted partition/stochastic block model} with partial observations.  Our analysis provides stronger guarantees than are current results on general matrix splitting \citep{candeswrightma,tong,xiaodongli,CHEJALSANCAR11}. The algorithm, model and results are given in Section~\ref{sec:main_results}. We prove our theoretical results in Section~\ref{sec:proofs} and provide empirical results in Section~\ref{sec:exper}.

\subsection{Related Work}
\label{sec:related}

Our problem can be interpreted in the general clustering context as one in which the presence of an edge between two points indicates a ``similarity'', and the lack of an edge means ``no similarity''. 
The general field of clustering is of course vast, and a detailed survey of all methods therein is beyond our scope here. We focus instead on the three sets of papers most relevant to the problem here: the work on correlation clustering, on the planted partition/stochastic block model, and on graph clustering with partial observations.

\subsubsection{Correlation Clustering} As mentioned, for a completely observed graph, our problem is mathematically precisely the same as correlation clustering formulated in~\cite{banblucha}; in particular a ``+" in correlation clustering corresponds to an edge in the graph,  a ``-" to the lack of an edge, and disagreements are defined in the same way. Thus, this paper can equivalently be considered as an algorithm, and guarantees, for {\em correlation clustering under partial observations}. Since correlation clustering is NP-hard, there has been much work on devising alternative approximation algorithms~\citep{banblucha, emafia}. Approximations using convex optimization, including LP relaxation~\citep{chagurwir,demimm,demimmemafia} and SDP relaxation~\citep{swa,MATSCH10}, possibly followed by rounding, have also been developed. We emphasize that we use a different convex relaxation, and we focus on understanding when our convex program yields an optimal clustering without further rounding. 

We note that~\citet{MATSCH10} use a convex formulation with constraints enforcing positive semi-definiteness, triangle inequalities and fixed diagonal entries. For the fully observed case, their relaxation is at least as tight as ours, and since they add more constraints, it is possible that there are instances where their convex program works and ours does not. However, this seems hard to prove/disprove. Indeed, in the full observation setting they consider, their exact recovery guarantee is no better than ours. Moreover, as we argue in the next section, our guarantees are order-wise optimal in some important regimes and thus cannot be improved even with a tighter relaxation. Practically, our method is faster since, to the best of our knowledge, there is no low-complexity algorithm to deal with the $ \Theta(n^3) $ triangle inequality constraints required by~\citet{MATSCH10}. This means that our method can handle large graphs while their result is practically restricted to small ones ($\sim$ 100 nodes). In summary, their approach has higher computational complexity, and does not provide significant and characterizable performance gain in terms of exact cluster recovery.

\subsubsection{Planted Partition Model} The planted partition model, also known as the stochastic block-model~\citep{CondonKarp,holland}, assumes that the graph is generated with in-cluster edge probability $ p $ and inter-cluster edge probability $ q $ (where $ p>q $) and fully observed. The goal is to recover the latent cluster structure. A class of this model with $ \tau \triangleq \max\{1-p,q\} < \frac{1}{2} $ is often used as benchmark for \emph{average case} performance for correlation clustering~\citep[see, e.g.,][]{MATSCH10}. Our theoretical results are applicable to this model and thus directly comparable with existing work in this area. A detailed comparison is provided in Table \ref{tab:comparison}. For fully observed graphs, our result matches the previous best bounds in both the minimum cluster size and the difference between in-cluster/inter-cluster densities. We would like to point out that nuclear norm minimization has been used to solve the closely related planted clique problem~\citep{Alon98findinga,AMEVAV11}.

\begin{table}[h!]
\begin{center}
\begin{tabular}{|c|c|c|}
\hline 
Paper & Cluster size $K$ & Density difference $(1-2\tau)$  \tabularnewline
\hline 
\hline 
\cite{boppana1987eigenvalues} & $n/2$ & $\tilde{\Omega}(\frac{1}{\sqrt{n}})$  \tabularnewline
\hline 
\cite{jerrum1998metropolis} & $n/2$ & $\tilde{\Omega}(\frac{1}{n^{1/6-\epsilon}})$  \tabularnewline
\hline 
\cite{CondonKarp} & $\tilde{\Omega}(n)$ & $\tilde{\Omega}(\frac{1}{n^{1/2-\epsilon}})$  \tabularnewline
\hline 
\cite{carson2001planted} & $n/2$ & $\tilde{\Omega}(\frac{1}{\sqrt{n}})$  \tabularnewline
\hline 
\cite{feige2001semirandom} & $n/2$ & $\tilde{\Omega}(\frac{1}{
\sqrt{n}}$)  \tabularnewline
\hline 
\cite{mcsherry2001spectralpartitioning} & $\tilde{\Omega}(n^{2/3})$ & $\tilde{\Omega}(\sqrt{\frac{n^2}{K^3}})$ \tabularnewline
\hline 
\cite{bollobas2004maxcut} & $\tilde{\Omega}(n)$  & $\tilde{\Omega}(\sqrt{\frac{ 1}{n}})$   \tabularnewline
\hline
\cite{giesen2005manypartition} & $\tilde{\Omega}(\sqrt{n})$ & $\tilde{\Omega}(\frac{\sqrt{n}}{K})$  \tabularnewline
\hline 
\cite{shamir2007improved} & $\tilde{\Omega}(\sqrt{n} )$ & $\tilde{\Omega}(\frac{\sqrt{n}}{K})$  \tabularnewline
\hline 
\cite{MATSCH10} & $\tilde{\Omega}(\sqrt{n})$  & $\tilde{\Omega}(1)$   \tabularnewline
\hline
\cite{Rohe10} & $\tilde{\Omega}(n^{3/4})$ & $\tilde{\Omega}(\frac{n^{3/4}}{K})$  \tabularnewline
\hline
\cite{OYMHAS11} & $\tilde{\Omega}(\sqrt{n})$ & $\tilde{\Omega}(\frac{\sqrt{n}}{K})$  \tabularnewline
\hline 
\cite{chaudhuri} & $\tilde{\Omega}(\sqrt{n})$ &  $\tilde{\Omega}(\frac{\sqrt{n}}{K})$ \tabularnewline
\hline 
 &  &   \tabularnewline
\hline
This paper & $\tilde{\Omega}(\sqrt{n})$ & $\tilde{\Omega}(\frac{\sqrt{n}}{K})$  \tabularnewline
\hline 
\end{tabular}
\end{center}
\caption{\small{\it Comparison with literature.} This table shows the lower-bound requirements on the minimum cluster size $K$ and the density difference $p-q=1-2\tau$ that existing literature needs for exact recovery of the planted partitions, when the graph is fully observed and $ \tau \triangleq \max\{1-p,q\}=\Theta(1) $. Some of the results in the table only guarantee recovering the membership of most, instead of all, nodes. To compare with these results, we use the soft-$ \Omega $ notation $ \tilde{\Omega}(\cdot) $, which hides the logarithmic factors that are necessary for recovering all nodes, which is the goal of  this paper. 
\label{tab:comparison}}

\end{table}

\subsubsection{Partially Observed Graphs} The previous work listed in Table \ref{tab:comparison}, except~\cite{OYMHAS11}, does not handle partial observations directly. One natural way to proceed is to impute the missing observations with no-edge, or random edges with symmetric probabilities, and then apply any of the results in Table \ref{tab:comparison}. This approach, however, leads to sub-optimal results. Indeed, this is done explicitly by \citet{OYMHAS11}. They require the probability of observation $ p_0 $ to satisfy $ p_0 \gtrsim \frac{\sqrt{\Kmin}}{n} $, where $ n $ is the number of nodes and $ \Kmin $ is the minimum cluster size; in contrast, our approach only needs $ p_0 \gtrsim \frac{n}{\Kmin^2}$ (both right hand sides have to be less than $ 1 $, requiring $ \Kmin \gtrsim \sqrt{n} $,  so the right hand side of our condition is order-wise smaller and thus less restrictive.)  \cite{budget} deal with partial observations directly and shows that $ p_0 \gtrsim \frac{1}{n} $ suffices for recovering two clusters of size $ \Omega(n) $. Our result is applicable to much smaller clusters of size $\tilde{\Omega}(\sqrt{n}) $. In addition, a nice feature of our result is that it explicitly characterizes the tradeoffs between the three relevant parameters: $ p_0 $, $ \tau $, and $ \Kmin $; theoretical result like this is not available in previous work. 

There exists other work that considers partial observations, but under rather different settings. For example, \cite{balcangupta}, \cite{voevodski2010efficient} and \cite{krishnamurthy2012hierarchical} consider the clustering problem where one samples the rows/columns of the adjacency matrix  rather than its entries. \cite{hunter} consider partial observations in the features rather than in the similarity graph. \cite{eriksson} show that $ \tilde{\Omega}(n) $ actively selected pairwise similarities are sufficient for recovering a hierarchical clustering structure. Their results seem to rely on the hierarchical structure. When disagreements are present, the first split of the cluster tree can recovers clusters of size $ \Omega(n) $; our results allow smaller clusters. Moreover, they require active control over the observation process, while we assume random observations.

\section{Main Results}
\label{sec:main_results}

Our setup for the graph clustering problem is as follows. We are given a partially observed graph of $ n $ nodes, whose adjacency matrix is $\A \in \mathbb{R}^{n\times n}$, which has $a_{i,j} = 1$ if there is an edge between nodes $i$ and $j$, $a_{i,j} = 0$ if there is no edge, and $a_{i,j} = $``?'' if we do not know. (Here we follow the convention that $a_{i,i}=0$ for all $i$.) Let $\OmegaO\triangleq\{(i,j):a_{i,j}\neq ?\}$ be the set of observed node pairs. The goal to find the {optimal clustering}, i.e., the one that has the minimum number of disagreements (defined in Section~\ref{sec:intro_approach}) in $\OmegaO$.

In the rest of this section, we present our algorithm for the above task and analyze its performance under the planted partition model with partial observations. We also study the optimality of the performance of our algorithm by deriving a necessary condition for any algorithm to succeed.

\subsection{Algorithm}

Our algorithm is based on convex optimization, and either {(a)} outputs a clustering that is guaranteed to be the one that minimizes the number of observed disagreements, or {(b)} declares ``failure".
In particular, it never produces a suboptimal clustering.\footnote{In practice, one might be able to use the ``failed'' output with rounding as an approximate solution. In this paper, we focus on the performance of the unrounded algorithm.} We now briefly present the main idea and then describe the algorithm.

Consider first the fully observed case, i.e., every $a_{i,j} = 0$ or 1. Suppose also that the graph is already ideally clustered---i.e., there is a partition of the nodes such that there is no edge between clusters, and each cluster is a clique. In this case,  the matrix $\A+\I$ is now a {\em low-rank} matrix, with the rank equal to the number of clusters. This can be seen by noticing that if we re-order the rows and columns so that clusters appear together, the result would be a {\em block-diagonal} matrix, with each block being an all-ones submatrix---and thus rank one. Of course, this re-ordering does not change the rank of the matrix, and hence $\A+\I$ is exactly low-rank.

Consider now any given graph, still fully observed. In light of the above, we are looking for a decomposition of its $\A+\I$ into a low-rank part $\K^*$ (of block-diagonal all-ones, one block for each cluster) and a remaining $\B^*$ (the disagreements), such that the number of non-zero entries in $\B^*$ is as small as possible; i.e., $\B^*$ is sparse. Finally, the problem we look at is recovery of the best $\K^*$ when we do not observe all entries of $ \A+\I $. The idea is depicted in Figure~\ref{fig1}.

\begin{figure}[t]
\centering
\includegraphics[width=2.2in,height=1.9in,trim = 0mm 40mm 0mm 40mm, clip]{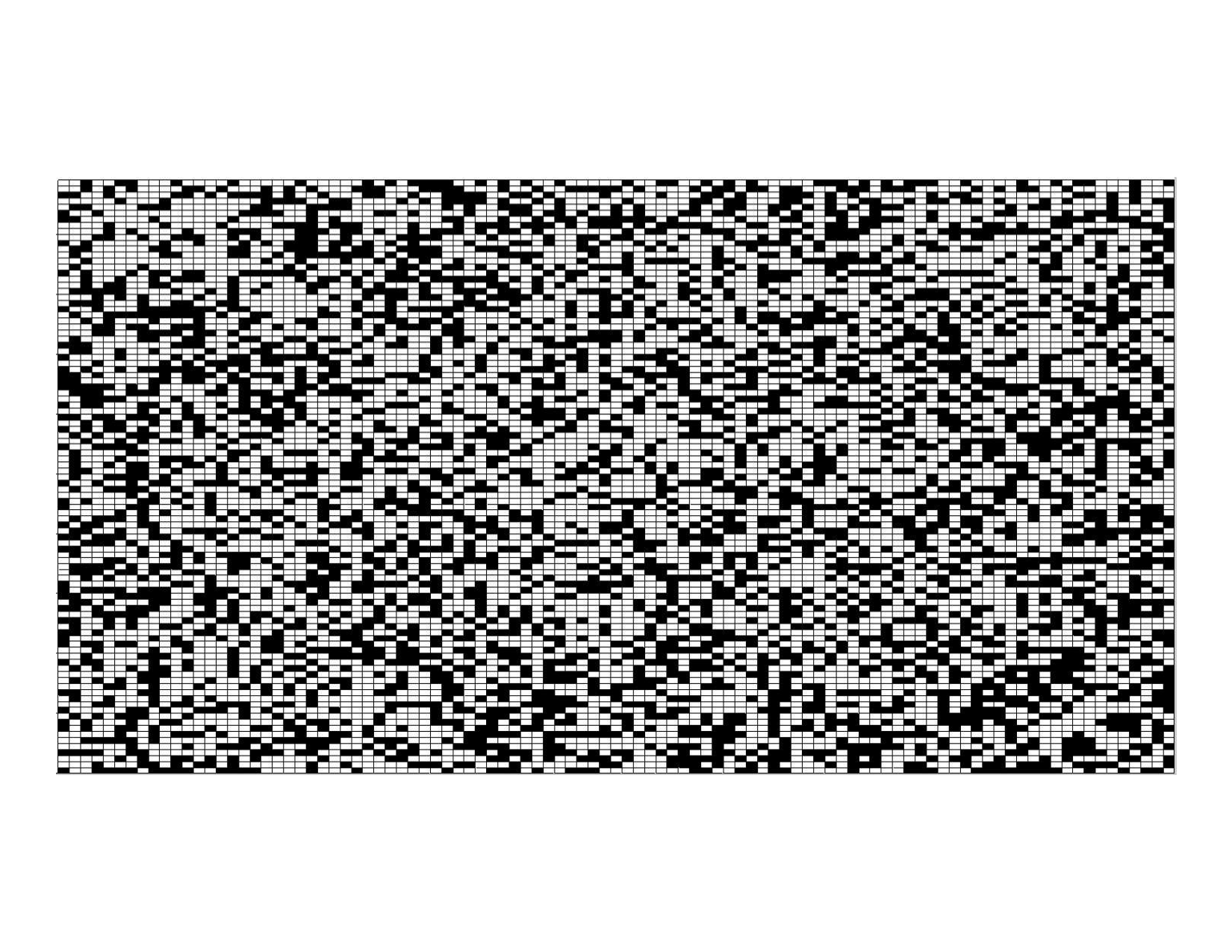}
\includegraphics[width=2.2in,height=1.9in,trim = 0mm 40mm 0mm 40mm, clip]{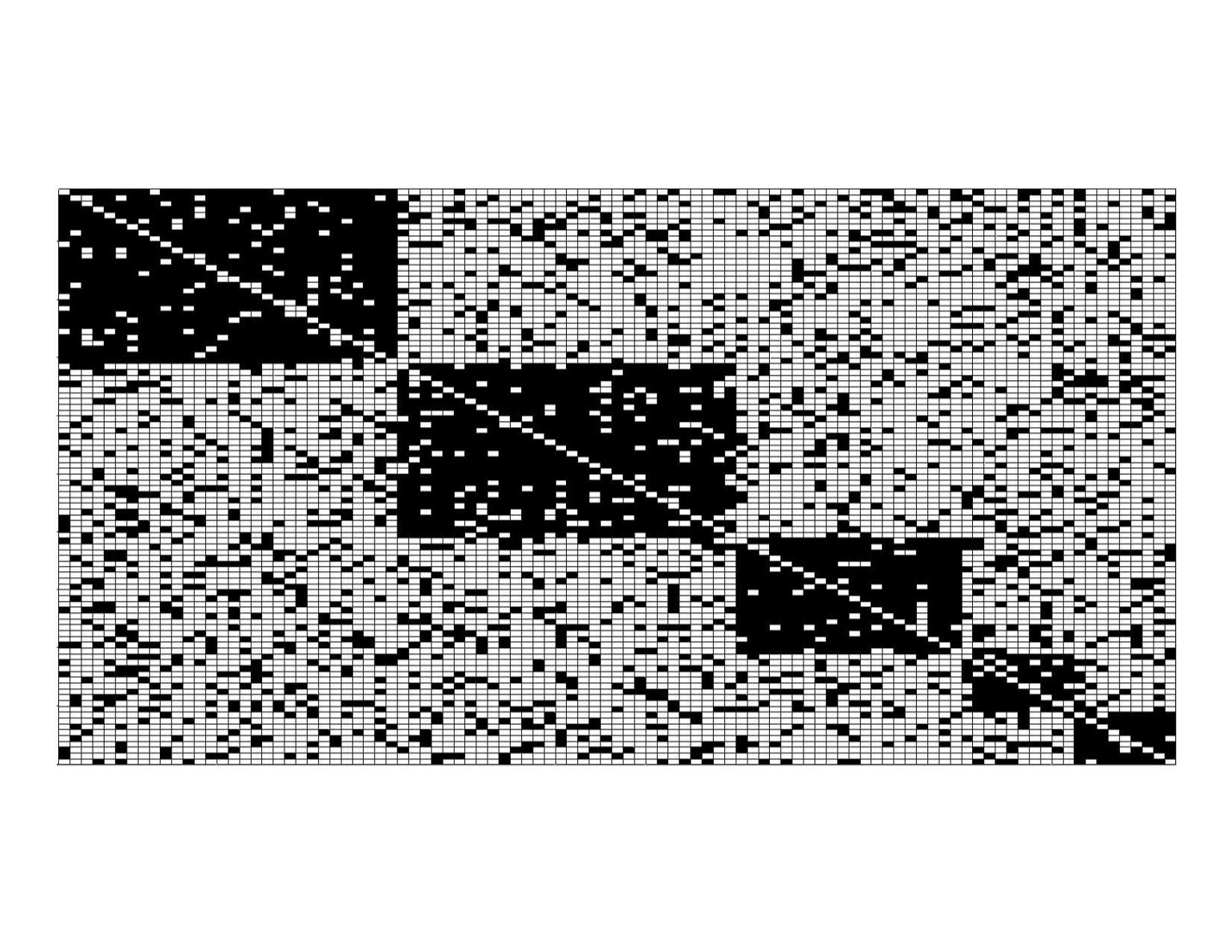}
\caption{\small The adjacency matrix of a graph drawn from the planted partition model before and after proper reordering (i.e., clustering) of the nodes. The figure on the right is indicative of the matrix as a superposition of a sparse matrix and a low-rank block diagonal one.\label{fig1}}
\end{figure}

We propose to perform the matrix splitting using convex optimization~\citep{cspw,candeswrightma}. Our approach consists of {dropping} any additional structural requirements, and just looking for a decomposition of the given $\A+\I$ as the sum of a sparse matrix $\B$ and a low-rank matrix $\K$. Recall that $\OmegaO$ is the set of observed entries, i.e., the set of elements of $\A$ that are known to be 0 or 1; we use the following convex program:
\begin{equation}\label{eq:partial_conv_prog} 
\begin{aligned}
\min_{\B,\K} & \quad  \lambda \left\Vert\B\right\Vert_1 + \left\Vert\K\right\Vert_* \\
\text{s.t.}&   \quad \mathcal{P}_{\Omega_{obs}}(\B+\K) = \mathcal{P}_{\Omega_{obs}}(\A+\I).
\end{aligned}
\end{equation}
Here, for any matrix $\M$, the term $\mathcal{P}_{\OmegaO}(\M)$ keeps all elements of $\M$ in $\OmegaO$ unchanged, and sets all other elements to 0; the constraints thus state that the sparse and low-rank matrix should in sum be consistent with the observed entries of $ \A+\I $. The term $\Vert\B\Vert_1 = \sum_{i,j} |b_{i,j}|$ is the $\ell_1$ norm of the entries of the matrix $ \B $, which is well-known to be a convex surrogate for the number of non-zero entries $\Vert\B\Vert_0$. The second term  $\Vert\K\Vert_* = \sum_s \sigma_s(\K)$ is the nuclear norm (also known as the trace norm), defined as the sum of the singular values of $\K$. This has been shown to be the tightest convex surrogate for the rank function for matrices with unit spectral norm \citep{fazelphd}. Thus our objective function is a convex surrogate for the (natural) combinatorial objective $\lambda\Vert\B\Vert_0 +  \textrm{rank}(\K)$. The optimization problem~\eqref{eq:partial_conv_prog} is, in fact, a semidefinite program (SDP)~\citep{cspw}.

We remark on the above formulation. (a) This formulation does not require specifying the number of clusters; this parameter is effectively learned from the data. The tradeoff parameter $ \lambda $ is artificial and can be easily determined: since any desired $ \K^* $ has trace exactly equal to $ n $, we simply choose the smallest $ \lambda $ such that the trace of the optimal solution is at least $ n $. This can be done by, e.g., bisection, which is described below. (b) It is possible to obtain tighter convex relaxations by adding more constraints, such as the diagonal entry constraints $ k_{i,i}=1,\forall i$,  the positive semi-definite constraint $\K\succeq 0$, or even the triangular inequalities $ k_{i,j} + k_{j,k} -k_{i,k} \le 1 $. Indeed, this is done by \cite{MATSCH10}. Note that the guarantees for our formulation (to be presented in the next subsection) automatically imply guarantees for any other tighter relaxations. We choose to focus on our (looser) formulation for two reasons. First, and most importantly, even with the extra constraints, \cite{MATSCH10} do not deliver better exact recovery guarantees (cf.~Table~\ref{tab:comparison}). In fact, we show in Section~\ref{sec:lowerbounds} that our results are near optimal in some important regimes, so tighter relaxations do not seem to provide additional benefits in exact recovery. Second, our formulation can be solved efficiently using existing Augmented Lagrangian Multiplier methods \citep{LinALM}. This is no longer the case with the $ \Theta(n^3) $ triangle inequality constraints enforced by~\citet{MATSCH10}, and solving it as a standard SDP is only feasible for small graphs.   

We are interested in the case when the convex program~\eqref{eq:partial_conv_prog} produces an optimal solution $ \K$ that is a block-diagonal matrix and corresponds to an ideal clustering.
\begin{definition}[Validity] The convex program~\eqref{eq:partial_conv_prog} is said to produce a {\em valid} output if the low-rank matrix part $\K$ of the optimal solution corresponds to a graph of disjoint cliques; i.e., its rows and columns can be re-ordered to yield a block-diagonal matrix with all ones for each block.
\end{definition}
Validity of a given $\K$ can be easily checked via elementary re-ordering operations.\footnote{If we re-order a valid $ \K $ such that identical rows and columns appear together, it will become block-diagonal.} Our first simple but useful insight is that whenever the convex program~(\ref{eq:partial_conv_prog}) yields a valid solution, it is the disagreement minimizer.
\begin{theorem}
For any $\lambda > 0$, if the solution of~\eqref{eq:partial_conv_prog} is valid, then it is the clustering that minimizes the number of observed disagreements.
\label{thm:disamin}
\end{theorem}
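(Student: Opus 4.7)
The whole theorem rests on one simple observation about the nuclear norm of a cluster matrix, together with optimality of $(\B^*,\K^*)$. Let me associate to every clustering $\mathcal{C}$ of the nodes a canonical feasible pair for the convex program: let $\K_\mathcal{C}$ be the block‑diagonal all‑ones matrix encoding $\mathcal{C}$, and set $\B_\mathcal{C} := \mathcal{P}_{\Omega_{obs}}(\I+\A-\K_\mathcal{C})$, i.e. equal to $\I+\A-\K_\mathcal{C}$ on observed entries and $0$ elsewhere. Then $(\B_\mathcal{C},\K_\mathcal{C})$ is feasible for \eqref{eq:partial_conv_prog}, and by construction $\|\B_\mathcal{C}\|_1 = d(\mathcal{C})$, the number of observed disagreements of $\mathcal{C}$.

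The key lemma I would state and prove is: $\|\K_\mathcal{C}\|_* = n$ for \emph{every} clustering $\mathcal{C}$. Indeed, a $k_i \times k_i$ all‑ones block has rank $1$ with unique nonzero singular value $k_i$, so the nuclear norm of a block‑diagonal arrangement of such blocks is $\sum_i k_i = n$. Thus the nuclear norm term is a constant, equal to $(1-\eta)n$, across all candidate clusterings, and among valid pairs the convex objective is just $\eta \, d(\mathcal{C}) + (1-\eta)n$.

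Next, suppose the optimum $(\B^*,\K^*)$ has $\K^*$ valid, corresponding to some clustering $\mathcal{C}^*$. I would first argue that $\B^*$ must vanish on $\Omega_{obs}^c$: setting unobserved entries of $\B^*$ to zero does not change $\K^*$ or violate the constraint, and strictly decreases $\eta \|\B^*\|_1$ unless those entries were already zero. Hence $\B^* = \mathcal{P}_{\Omega_{obs}}(\I+\A-\K^*)$, so $\|\B^*\|_1 = d(\mathcal{C}^*)$ and the objective at the optimum is $\eta\, d(\mathcal{C}^*) + (1-\eta)n$.

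Now for any alternative clustering $\mathcal{C}'$, the feasible pair $(\B_{\mathcal{C}'},\K_{\mathcal{C}'})$ achieves objective $\eta\, d(\mathcal{C}') + (1-\eta)n$. Optimality of $(\B^*,\K^*)$ yields
\[
\eta\, d(\mathcal{C}^*) + (1-\eta)n \;\le\; \eta\, d(\mathcal{C}') + (1-\eta)n,
\]
and dividing by $\eta > 0$ gives $d(\mathcal{C}^*) \le d(\mathcal{C}')$, proving optimality of $\mathcal{C}^*$. The only ``subtle'' step is the nuclear norm identity $\|\K_\mathcal{C}\|_* = n$, which is what makes the $(1-\eta)\|\K\|_*$ term invisible to the comparison between valid clusterings; everything else is feasibility bookkeeping and the optimality inequality.
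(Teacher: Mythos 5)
Your proof is correct and takes essentially the same route as the paper: both rest on the two facts that every valid cluster matrix has nuclear norm exactly $n$ (so the $(1-\eta)\|\K\|_*$ term is constant over valid solutions) and that the $\ell_1$ norm of the associated disagreement matrix equals the number of observed disagreements, reducing the objective on valid solutions to an affine function of the disagreement count. The paper packages this as a relaxation of a non-convex program with an explicit validity constraint while you compare the valid optimum directly against canonical feasible pairs $(\B_{\mathcal{C}},\K_{\mathcal{C}})$ for every clustering (and you treat the vanishing of $\B^*$ off $\OmegaO$ slightly more explicitly), but the substance is identical.
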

Our complete clustering procedure is given as Algorithm~\ref{alg1}. It takes the adjacency matrix of the graph $\A$ and outputs either the optimal clustering or declares failure.
Setting the parameter $\lambda$ is done via binary search. The initial value of $ \lambda $ is not crucial; we use $ \lambda = \frac{1}{32\sqrt{\bar{p}_0 n}} $ based on our theoretical analysis in the next sub-section, where $ \bar{p}_0 $ is the empirical fraction of observed pairs. To solve the optimization problem~\eqref{eq:partial_conv_prog}, we use the fast algorithm developed by \citet{LinALM}, which is tailored for matrix splitting and takes advantage of the sparsity of the observations. By Theorem~\ref{thm:disamin}, whenever the algorithm results in a valid $\K$, we have found the optimal clustering.
\begin{algorithm}
\caption{Optimal-Cluster($\A$)}
\label{alg1}
\begin{algorithmic}
\STATE $ \lambda \gets \frac{1}{32\sqrt{\bar{p}_0 n}} $
\WHILE {not terminated}
\STATE Solve~\eqref{eq:partial_conv_prog} to obtain the solution $ (\B,\K) $
\IF {$\K$ is valid}
	\STATE Output the clustering in $\K$ and EXIT.
\ELSIF {$ \textrm{trace}(\K) > n $}
	\STATE $ \lambda \gets \lambda/2  $
\ELSIF {$ \textrm{trace}(\K) < n $}
	\STATE $ \lambda \gets 2\lambda  $
\ENDIF

\ENDWHILE
\STATE Declare Failure.
\end{algorithmic}
\end{algorithm}

\subsection{Performance Analysis}
\label{sec:performance}

For the main analytical contribution of this paper, we provide conditions under which the above algorithm will find the clustering that minimizes the number of disagreements among the observed entries. In particular, we characterize its performance under the standard and classical planted partition/stochastic block model with partial observations, which we now describe.
\begin{definition}
[Planted Partition Model with Partial Observations] Suppose that $n$ nodes are partitioned into $ r $ clusters, each of size at least $ \Kmin $. Let $\K^*$ be the low-rank matrix corresponding to this clustering (as described above). The adjacency matrix $ \A $ of the graph is generated as follows: for each pair of nodes $ (i,j) $ in the same cluster, $ a_{i,j} =?$  with probability $ 1-p_0 $, $ a_{i,j}=1 $ with probability $ p_0 p $, or $ a_{ij}=0 $ otherwise, independent of all others; similarly, for $ (i,j) $ in different clusters, $ a_{i,j} =?$  with probability $ 1-p_0 $, $ a_{i,j}=1 $ with probability $ p_0 q $, or $ a_{i,j}=0 $ otherwise. 
\end{definition}
Under the above model, the graph is observed at locations chosen at random with probability $ p_0 $. In expectation a fraction of $ 1-p $ of the in-cluster observations are disagreements; similarly, the fraction of disagreements in the across-cluster observations is $ q $. Let $\B^* = \mathcal{P}_{\OmegaO} (\A + \I - \K^*)$ be the matrix of observed disagreements for the original clustering; note that the support of $\B^*$ is contained in $\OmegaO$.  The following theorem provides a sufficient condition for our algorithm to recover the original clustering $ (\B^*,\K^*) $ with high probability. Combined with Theorem~\ref{thm:disamin}, it also shows that under the same condition,  the original clustering is disagreement minimizing with high probability.
\begin{theorem}\label{thm:partial}
Let $ \tau = \max\{1-p,q\} $. There exist universal positive constants $ c $ and $C$ such
that, with probability at least $1-cn^{-10}$, the original clustering
$(\B^\ast,\K^\ast)$ is the unique optimal solution of
(\ref{eq:partial_conv_prog}) with $\lambda = \frac{1}{32\sqrt{np_0}}$
provided that
\begin{equation}
 p_{0}\left(1-2\tau\right)^{2}\ge C\frac{n\log^{2}n}{\Kmin^{2}}.\label{eq:main_cond}
\end{equation}
\end{theorem}
Note that the quantity $ \tau $ is (an upper bound of) the probability of having a disagreement, and  $ 1-2\tau $ is (a lower bound of) the density gap $ p-q $. The sufficient condition in the theorem is given in terms of the three parameters that define problem: the minimum cluster size $ \Kmin $, the density gap $ 1-2\tau$, and the observation probability $ p_0 $. We remark on these parameters.

\begin{itemize}
\item \textit{Minimum cluster size $ \Kmin $.} Since the left hand side of the condition~\eqref{eq:main_cond} in Theorem~\ref{thm:partial} is no more than $ 1 $, it imposes a lower-bound $ \Kmin=\tilde{\Omega}(\sqrt{n}) $ on the cluster sizes. This means that our method can handle a growing number $ \tilde{O}(\sqrt{n})$ of clusters. The lower-bound on $ \Kmin $ is attained when $ 1-2\tau$ and $p_0 $ are both $ \Theta(1) $, i.e., not decreasing as $ n $ grows. Note that all relevant works require a lower-bound at least as strong as ours (cf. Table~\ref{tab:comparison}).

\item \textit{Density gap $ 1-2\tau $.}  When $ p_0 = \Theta(1) $, our result allows this gap to be vanishingly small, i.e., $ \tilde{\Omega}\left(\frac{\sqrt{n }}{\Kmin}\right) $, where a larger $ \Kmin $ allows for a smaller gap. As we mentioned in Section~\ref{sec:related}, this matches the best available results (cf. Table~\ref{tab:comparison}), including those in \citet{MATSCH10} and \citet{OYMHAS11}, which use tighter convex relaxations that are more computationally demanding. We note that directly applying existing results in the low-rank-plus-sparse literature \citep{candeswrightma,xiaodongli} leads to weaker results, where the gap be bounded below by a constant.

\item \textit{Observation probability $ p_0 $.} When $ 1-2\tau =\Theta(1) $, our result only requires a vanishing fraction of observations, i.e., $ p_0 $ can be as small as $ \tilde{\Theta}\left(\frac{n}{\Kmin^2}\right)$; a larger $ \Kmin $ allows for a smaller $ p_0 $. As mentioned in Section~\ref{sec:related}, this scaling is better than prior results we know of. 

\item \textit{Tradeoffs.} A novel aspect of our result is that it shows an explicit tradeoff between the observation probability $ p_0 $ and the density gap $ 1-2\tau $. The left hand side of~\eqref{eq:main_cond} is linear in $ p_0 $ and quadratic in $ 1-2\tau  $. This means if the number of observations is four times larger, then we can handle a $ 50\% $ smaller density gap. Moreover, $ p_0 $ can go to zero quadratically faster then $ 1-2\tau $. Consequently, treating missing observations as disagreements would lead to quadratically weaker results. This agrees with the intuition that handling missing entries with known locations is easier than correcting disagreements whose locations are unknown.
\end{itemize}

We would like to point out that our algorithm has the capability to handle outliers. Suppose there are some isolated nodes which do not belong to any cluster, and they connect to each other and each node in the clusters with probability at most $ \tau $, with $ \tau $ obeying the condition~\eqref{eq:main_cond} in Theorem~\ref{thm:partial}. Our algorithm will classify all these edges as disagreements, and hence automatically reveal the identity of each outlier. In the output of our algorithm, the low rank part $\K$ will have all zeros in the  columns and rows corresponding to outliers---all their edges will appear in the disagreement matrix $\B$.

\subsection{Lower Bounds}
\label{sec:lowerbounds}

We now discuss the tightness of Theorem~\ref{thm:partial}. Consider first the case where $ \Kmin=\Theta(n) $, which means there are a constant number of clusters. We establish a fundamental lower bound on the density gap $ 1-2\tau $ and the observation probability $ p_0 $ that are  required for \emph{any} algorithm to correctly recover the clusters. 

\begin{theorem}
\label{thm:lowerbounds}
Under the planted partition model with partial observations, suppose the true clustering is chosen uniformly at random from all possible clusterings with equal cluster size $ K $. If $ K=\Theta(n) $ and $ \tau =1-p=q>1/100 $, then for any algorithm to correctly identify the clusters with probability at least $ \frac{3}{4} $, we need 
$$
p_0 (1-2\tau)^2 \ge C \frac{1}{n},
$$
where $ C>0 $ is an absolute constant.
\end{theorem}

Theorem~\ref{thm:lowerbounds} generalizes a similar result in~\citet{chaudhuri}, which does not consider partial observations. The theorem applies to any algorithm regardless of its computational complexity, and characterizes the fundamental tradeoff between $ p_0 $ and $ 1-2\tau $. It shows that when $ \Kmin = \Theta(n) $, the requirement for $ 1-2\tau $ and $ p_0 $ in Theorem~\ref{thm:partial} is optimal up to logarithmic factors, and cannot be significantly improved by using more complicated methods.

For the general case with $ \Kmin = O(n) $, only part of the picture is known. Using non-rigorous arguments, \cite{decelle} show that  $ 1-2\tau \gtrsim \frac{\sqrt{n}}{\Kmin} $ is necessary when $ \tau=\Theta(1) $ and the graph is fully observed; otherwise recovery is impossible or computationally hard. According to this lower-bound, our requirement on the density gap $ 1-2\tau $ is probably tight (up to log factors) for all $ \Kmin $. However, a rigorous proof of this claim is still lacking, and seems to be a difficult problem. Similarly, the tightness of our condition on $ p_0 $ and the tradeoff between $ p_0 $ and $ \tau $ is also unclear in this regime.

\section{Proofs}
\label{sec:proofs}

In this section, we prove Theorems~\ref{thm:disamin} and~\ref{thm:partial}. The proof of Theorem~\ref{thm:lowerbounds} is deferred to Appendix~\ref{sec:proof_lower}.

\subsection{Proof of Theorem \ref{thm:disamin}}
\label{sec:valid}

We first prove Theorem~\ref{thm:disamin}, which says that if the optimization problem~\eqref{eq:partial_conv_prog} produces a valid matrix, i.e., one that corresponds to a clustering of the nodes, then this is the disagreement minimizing clustering. Consider the following non-convex optimization problem
\begin{equation}
\label{eq:nonconv_prog}
\begin{aligned}
  \min_{\B,\K} & \quad \lambda \left\Vert\B\right\Vert_1 +  \left\Vert\K\right\Vert_*  \\
  \text{s.t.} & \quad \PP_{\OmegaO}(\B+\K) = \PP_{\OmegaO}(\I+\A),\\
               & \quad \K\textrm{ is valid,}
\end{aligned}
\end{equation}
and let $(\B,\K)$ be any feasible solution. Since $\K$ represents a
valid clustering, it is positive semidefinite and has all ones along
its diagonal. Therefore, it obeys $\left\| \K
\right\| _{\ast}=\textrm{trace}(\K)=n$. On the other hand, because
both $\K-\I$ and $\A$ are adjacency matrices, the entries of
$\B=\I+\A-\K$ in $ \OmegaO $ must be equal to $-1$, $1$ or $0$ (i.e., it is a
disagreement matrix). Clearly any optimal $ \B $ must have zeros at the entries in $ \OmegaO^c $. Hence $\left\| \B \right\|_{1}=\left\| \PP_{\OmegaO}(\B)
\right\|_{0}$ when $\K$ is valid. We thus conclude that the above optimization problem~\eqref{eq:nonconv_prog}
is equivalent to minimizing $\Vert\PP_{\OmegaO}(\B)\Vert_0$ subject to the same constraints.
This is exactly the minimization of the number of disagreements on
the observed edges. Now notice that~\eqref{eq:partial_conv_prog} is a relaxed version of~\eqref{eq:nonconv_prog}.
Therefore, if the optimal solution of~\eqref{eq:partial_conv_prog} is valid and thus feasible to~\eqref{eq:nonconv_prog},
then it is also optimal to~\eqref{eq:nonconv_prog}, the disagreement minimization problem.

\subsection{Proof of Theorem \ref{thm:partial} }

We now turn to the proof of Theorem~\ref{thm:partial}, which provides guarantees for when the convex program~(\ref{eq:partial_conv_prog}) recovers the true clustering $ (\B^*,\K^*) $.

\subsubsection{Proof Outline and Preliminaries}
\label{sec:outline}

We overview the main steps in the proof of Theorem~\ref{thm:partial}; details are provided in Sections~\ref{sec:step1}--\ref{sec:step3} to follow. We would like to show that the pair $ (\B^*,\K^*) $ corresponding to the true clustering is the unique optimal solution to our convex program~(\ref{eq:partial_conv_prog}). This involves the following three steps.

{Step 1:} We show that it suffices to consider an equivalent model for the observation and disagreements. This model is easier to handle, especially when the observation probability and density gap are vanishingly small, which is the regime of interest in this paper.

{Step 2:} We write down the sub-gradient based first-order sufficient conditions that need to be satisfied for $ (\B^*,\K^*) $ to be the unique optimum of (\ref{eq:partial_conv_prog}). In our case, this involves showing the existence of a matrix $\W$---the {\em dual certificate}---that satisfies certain properties. This step is technical---requiring us to deal with the intricacies of sub-gradients since our convex function is not smooth---but otherwise standard. Luckily for us, this has been done previously \citep{cspw,candeswrightma,xiaodongli}.

{Step 3:} Using the assumptions made on the true clustering $ \K^*  $ and its disagreements $\B^*$, we construct a candidate dual certificate $\W$ that meets the requirements in step 2, and thus certify $(\B^*,\K^*)$ as being the unique optimum. 

The crucial Step 3 is where we go beyond the existing literature on matrix splitting \citep{cspw,candeswrightma,xiaodongli}. These results assume the observation probability and/or density gap is at least a constant, and hence do not apply to our setting. Here we provide a refined analysis, which leads to better performance guarantees than those that could be obtained via a direct application of existing sparse and low-rank matrix splitting results. 

Next, we introduce some notations used in the rest of the proof of the theorem. The following definitions related to $\K^*$ are standard. By symmetry, the SVD of $\K^*$ has the form $\U\mathbf{\Sigma}\U^{T}$, where $ \U\in\mathbb{R}^{n\times r} $ contains the singular vectors of $ \K^* $. We define the subspace $$\T \triangleq \left\{\U\X^{T}+\Y\U^{T}:\X,\Y\in\R^{n\times r}\right\},$$ which is spanned of all matrices that share either the same column space or the same row space as $\K^*$. For any matrix $\M\in\R^{n\times n}$, its {orthogonal projection} to the space $\T$ is given by $\PP_{\T}\left(\M\right)=\U\U^{T}\M+\M\U\U^{T}-\U\U^{T}\M\U\U^{T}$. The projection onto $\T^{\perp}$, the complement orthogonal space of $\T$, is given by $\PP_{\T^{\perp}}(\M) = \M - P_\T(\M)$.

The following definitions are related to $\B^*$ and partial observations. Let $\Omega^*=\{(i,j):  b^{*}_{i,j} \neq 0 \}$ be the set of matrix entries corresponding to the disagreements. Recall that $\OmegaO$ is the set of observed entries. For any matrix $ \M $ and entry set $ \Omega_0$, we let $\PP_{\Omega_0}\left(\M\right)\in\R^{n\times n}$ be the matrix obtained from $\M$ by setting all entries not in the set $\Omega_0$ to zero. We write $ \Omega_0 \sim \textrm{Ber}_0(p) $ if the entry set $ \Omega_0 $ does not contain the diagonal entries, and each pair $ (i,j) $ and $ (j,i) $ ($ i\neq j $) is contained in $ \Omega_0 $ with probability $ p $, independent all others; $ \Omega_0 \sim \textrm{Ber}_1(p) $ is defined similarly except that $ \Omega_0 $ contains all the diagonal entries. Under our partially observed planted partition model, we have $ \OmegaO \sim \textrm{Ber}_1 (p_0) $ and $ \Omega^* \sim \textrm{Ber}_0 (\tau) $.

Several matrix norms are used. $\|\M\|$ and $\|\M\|_F$ represent the spectral and Frobenius norms of the matrix $\M$, respectively, and $\|\M\|_\infty\triangleq \max_{i,j}|m_{i,j}|$ is the matrix infinity norm.

\subsubsection{Step 1: Equivalent Model for Observations and Disagreements}\label{sec:step1}
It is easy show that increasing $ p $ or decreasing $ q $ can only make the probability of success higher, so without loss of generality we assume $ 1-p=q=\tau $. Observe that the probability of success is completely determined by the distribution of $ (\OmegaO, \B^*) $ under the planted partition model with partial observations. The first step is to show that it suffices to consider an equivalent model for generating $(\OmegaO, \B^*)$, which results in the same distribution but is easier to handle. This is in the same spirit as \citet[][Theorems 2.2 and 2.3]{candeswrightma} and \citet[][Section 4.1]{xiaodongli}. In particular, we consider the following procedure:
\begin{enumerate}
\item Let $\Gamma\sim \textrm{Ber}_1\left(p_0(1-2\tau)\right)$, and $\Omega\sim \textrm{Ber}_0\left(\frac{2p_0\tau}{1-p_0+2p_0\tau}\right)$. Let $\OmegaO=\Gamma\cup\Omega$. 
\item Let $\S$ be a symmetric random matrix whose upper-triangular entries are independent and satisfy $ \P(s_{i,j}=1) = \P(s_{i,j}=-1) = \frac{1}{2}$.
\item Define $\Omega'\subseteq\Omega$ as $\Omega'=\left\{ (i,j): (i,j)\in\Omega,s_{i,j}=1-2k^*_{i,j}\right\} $. In other words, $ \Omega' $ is the entries of $ \S $ whose signs are consistent with a disagreement matrix.
\item Define $\Omega^*=\Omega'\backslash\Gamma$, and $\tilde{\Gamma}=\OmegaO\backslash\Omega^*$.
\item Let $\B^*=\PP_{\Omega^*}(\S).$
\end{enumerate}
It is easy to verify that $(\OmegaO,\B^*$)
has the same distribution as in the original model. In particular, we have $\P[(i,j)\in\OmegaO]=p_0$,
$\P[(i,j)\in \Omega^*,(i,j)\in\OmegaO]=p_0\tau$ and $\P[(i,j)\in \Omega^*,(i,j)\notin\OmegaO]=0$, and observe that given $ \K^* $, $ \B^* $ is completely determined by its support $ \Omega^* $.

The advantage of the above model is that $ \Gamma $ and $ \Omega $ are independent of each other, and $ \S $ has random signed entries. This facilitates the construction of the dual certificate, especially in the regime of vanishing $ p_0 $ and $ \left( \frac{1}{2} - \tau \right) $ considered in this paper. We use this equivalent model in the rest of the proof.

\subsubsection{Step 2: Sufficient Conditions for Optimality}

We state the first-order conditions that guarantee $(\B^*,\K^*)$ to be the unique optimum of \eqref{eq:partial_conv_prog} with high probability. Here and henceforth, by \emph{with high probability} we mean with probability at least $1-cn^{-10}$ for some universal constant $c>0$. The following lemma follows from Theorem 4.4 in~\citet{xiaodongli} and the discussion thereafter.
\begin{lemma}
[Optimality Condition]\label{lem:Suff_Optimality_Partial}
\noindent Suppose $\left\Vert \frac{1}{(1-2\tau)p_0}\PP_\T \PP_\Gamma \PP_\T -\PP_\T \right\Vert \le \frac{1}{2}$. Then $(\B^*, \K^*)$ is the unique optimal solution to \eqref{eq:partial_conv_prog} with high probability if there exists $\W \in \R^{n\times n}$ such that
\begin{enumerate}
\item $\left\Vert \PP_{\T}(\W+\lambda \PP_{\Omega}\S-\U\U^{\top})\right\Vert _{F}\le\frac{\llambda}{n^{2}}$,
\item $\left\Vert \PP_{\T^{\bot}}(\W+\llambda \PP_{\Omega}\S)\right\Vert \le\frac{1}{4}$,
\item $\PP_{\Gamma^{c}}(\W)=0$,
\item $\left\Vert \PP_{\Gamma}(\W)\right\Vert _{\infty}\le\frac{\llambda}{4}$.
\end{enumerate}
\end{lemma}
Lemma \ref{lem:OP} in the appendix guarantees that the condition $\left\Vert \frac{1}{(1-2\tau) p_0}\PP_\T \PP_\Gamma \PP_\T -\PP_\T \right\Vert \le \frac{1}{2}$ is satisfied with high probability under the assumption of Theorem \ref{thm:partial}. It remains to show the existence of a desired dual certificate $ \W $ which satisfies the four conditions in Lemma~\ref{lem:Suff_Optimality_Partial} with high probability.

\subsubsection{Step 3: Dual Certificate Construction}\label{sec:step3}

We use a variant of the so-called \emph{golfing scheme}~\citep{candeswrightma,Gross} to construct $\W$. Our application of golfing scheme, as well as its analysis, is different from previous work and leads to stronger guarantees. In particular, we go beyond existing results by allowing the fraction of observed entries and the density gap to be vanishing.

By definition in Section~\ref{sec:step1}, $\Gamma$ obeys $\Gamma\sim\textrm{Ber}_1(p_0(1-2\tau))$. Observe that $\Gamma$ may be considered to be generated by
$\Gamma = \bigcup_{1\le k \le k_0}\Gammak$, where the sets $\Gammak \sim\textrm{Ber}_1(t)$ are independent; here the parameter $t$ obeys $p_0(1-2\tau)=1-(1-t)^{k_0}$, and $k_0$ is chosen to be $\left\lceil 5\log n\right\rceil $. This implies $t \ge p_{0}(1-2\tau)/k_0\ge C_{0}\frac{n\log
n}{K_{\min}^2}$ for some constant $C_{0}$, with the last inequality holds under the assumption of Theorem \ref{thm:partial}. For any random entry set $\Omega_0\sim\textrm{Ber}_1(\rho)$,
define the operator $\RR_{\Omega_{0}}:\mathbb{R}^{n\times n} \mapsto \mathbb{R}^{n\times n}$ by
\begin{align*}
  \RR_{\Omega_{0}} (\M) 
 =  \sum_{i=1}^{n}m_{i,i}e_{i}e_{i}^{T} + \rho^{-1} \sum_{1\le i<j\le
n}\delta_{ij}^{}m_{i,j}\left(e_{i}e_{j}^{T}+e_{j}e_{i}^{T}\right),
\end{align*}
where $ \delta_{ij} $ is the indicator random variable with $\delta_{ij}^{}=1$ if $(i,j) \in \Omega_0$ and 0 otherwise,
and $e_i$ is the $i$-th standard basis in $ \mathbb{R}^{n\times n} $, i.e., the 
column vector with $1$ in its $i$-th entry and $0$ elsewhere.

We now define our dual certificate. Let $\W=\W_{k_0}$,
where $\W_{k_0}$ is defined recursively by setting
$\W_0 = 0$ and for all $k=1,2,\ldots,k_0$,
\begin{equation*}
 \W_{k}
 =  \W_{k-1}+\RR_{\Gamma_{k}}\PP_{\T}\left( \U\U^T-\llambda  \PP_{\T}(\EE)-\W_{k-1} \right).
\end{equation*}

Clearly the equality condition in Lemma~\ref{lem:Suff_Optimality_Partial} is satisfied. It remains to show that $\W$ also satisfies the inequality conditions with high probability. The proof makes use of the technical Lemmas~\ref{lem:OP}--\ref{lem:PTE_inf} given in the appendix. For convenience of notation, we define the quantity $\Delta_{k} = \U\U^T
-\llambda \PP_{\T}\left(\EE\right) -\PP_{\T}(\W_{k})$, and use the notation
\[
\prod_{i=1}^{k} (\PP_{\T}-\PP_{\T}\RR_{\Gammai}\PP_{\T})
  = (\PP_{\T}-\PP_{\T}\RR_{\Gammak}\PP_{\T}) \cdots (\PP_{\T}-\PP_{\T}\RR_{\Gamma_{1}}\PP_{\T}),
\]
where the order of multiplication is important. Observe that by
construction of $\W$, we have
\begin{align}
 \Delta_{k}  = & \prod_{i=1}^{k} (\PP_{\T}-\PP_{\T}\RR_{\Gammai}\PP_{\T}) (\U\U^T - \llambda \PP_\T\EE), k=1,\ldots,k_0, \label{eq:stepError}\\
 \W_{k_0} = & \sum_{k=1}^{k_0}\RR_{\Gamma_{k}}\Delta_{k-1}.
\label{eq:SumStepError}
\end{align}
We are ready to prove that $ \W $ satisfies inequalities 1, 2 and 4 in Lemma~\ref{lem:Suff_Optimality_Partial}.

\textit{Inequality 1:} 
Thanks to~\eqref{eq:stepError}, we have the following geometric convergence :
\begin{align*}
\left\Vert \PP_{\T}(\W+\lambda \PP_{\Omega}\S-\U\U^{\top})\right\Vert_{F}  &=   \left\Vert \Delta_{k_0}\right\Vert_F \\
 & \le  \left(\prod_{k=1}^{k_0}\left\Vert \PP_{\T}-\PP_{\T}\RR_{\Gamma_{k}}\PP_{\T}\right\Vert \right)\left\Vert \U\U^T - \llambda \PP_{\T} \EE\right\Vert _{F}\\
 & \overset{(a)}{\le}  e^{-k_0} ( \left\Vert \U\U^T \right \Vert_F + \llambda \left\Vert \PP_{\T}\EE\right\Vert _{F})\\
 &\overset{(b)}{\le} n^{-5} ( n +  \llambda \cdot n)
 \le  (1+ \llambda) n^{-4}\,\,
  \overset{(c)}{\le} \,\, \frac{1}{2n^{2}} \llambda.
\end{align*}
Here, the inequality (a) follows Lemma \ref{lem:popr} with $\epsilon_1 = e^{-1}$, (b) follows from our choices of $\lambda$ and $k_0$ and the fact that $\left\Vert
\PP_{\T} \EE\right\Vert _{F}\le\left\Vert \EE\right\Vert _{F}\le n$,
and (c) holds under the assumption $\llambda \ge \frac{1}{32\sqrt{n}}$ in the theorem.

\textit{Inequality 4:} 
We have
\begin{align}
\left\Vert \PP_{\Gamma}(\W) \right\Vert _{\infty}
= \left\Vert \PP_{\Gamma}(\W_{k_0}) \right\Vert _{\infty}  \le   \sum_{k=1}^{k_0}\left\Vert \RR_{\Gamma_{i}}\Delta_{i-1}\right\Vert _{\infty}\nonumber
\le t^{-1}\sum_{k=1}^{k_0}\left\Vert \Delta_{k-1}\right\Vert _{\infty},\nonumber
\end{align}
where the first inequality follows from~\eqref{eq:SumStepError} and the triangle inequality. We proceed to obtain
\begin{align}
 \sum_{k=1}^{k_0}\left\Vert \Delta_{k-1}\right\Vert _{\infty}
 & \overset{(a)}{=}  \sum_{k=1}^{k_0}\left\Vert \prod_{i=1}^{k-1} (\PP_{\T}-\PP_{\T}\RR_{\Gammai}\PP_{\T}) (\U\U^T - \llambda \PP_\T\EE)\right\Vert _{\infty} \nonumber \\
 & \overset{(b)}{\le}  \sum_{k=1}^{k_0} \left(\frac{1}{2}\right)^k \left\Vert  \U\U^T - \llambda \PP_\T\EE\right\Vert _{\infty} \nonumber \\
 & \overset{(c)}{\le}  \frac{1}{\Kmin} + \llambda \sqrt{\frac{p_0 n\log n}{\Kmin^2}},
 \label{eq:sum_delta_inf}
\end{align} 
where (a) follows from~\eqref{eq:stepError}, (b) follows from Lemma~\ref{lem:INF} and (c) follows from Lemma~\ref{lem:PTE_inf}. It follows that
\begin{align*}
\left\Vert \PP_{\Gamma}(\W) \right\Vert _{\infty}
& \le  \frac{1}{t} \left( \frac{1}{\Kmin} + \frac{n\log n}{\Kmin^2}\llambda \right)\\
& \le  \frac{k_0}{p_0(1-2\tau)}  \left( \frac{1}{\Kmin} + \llambda \sqrt{\frac{p_0 n\log n}{\Kmin^2}} \right)
 \le  \frac{1}{4}\llambda,
\end{align*}
where the last inequality holds under the assumption of Theorem \ref{thm:partial}

\textit{Inequality 2:} 
Observe that by the triangle inequality, we have
$$\left\Vert \PP_{\T^{\perp}} (\W + \llambda \EE) \right\Vert \le\llambda
 \left\Vert \PP_{\T^{\perp}} (\EE) \right\Vert
 +\left\Vert \PP_{\T^{\perp}}(\W_{k_0}) \right\Vert .$$
For the first term, standard results on the norm of a
matrix with i.i.d. entries (e.g., see~\citealt{Versh}) give
\begin{equation*}
\llambda\left\Vert \PP_{\T^{\perp}} (\EE) \right\Vert 
\le \llambda\left\Vert \EE \right\Vert 
\le \frac{1}{32\sqrt{p_0 n}} \cdot 4\sqrt{\frac{2p_0\tau n}{1-p_0+2p_0\tau}}
\le\frac{1}{8}
\end{equation*}
It remains to show that the second term is bounded by $\frac{1}{8}$. To this end, we observe that
\begin{align}
\left\Vert \PP_{\T^{\perp}} (\W_{k_0}) \right\Vert
 & \overset{(a)}{=}  \sum_{k=1}^{k_0}\left\Vert \PP_{\T^{\perp}}\left(\RR_{\Gamma_{k}}\Delta_{k-1}-\Delta_{k-1}\right)\right\Vert \nonumber\\
 & \le  \sum_{k=1}^{k_0}\left\Vert \left(\RR_{\Gamma_{k}}-\mathcal{I}\right)\Delta_{k-1}\right\Vert \nonumber \\
 & \overset{(b)}{\le}  C \sqrt{\frac{n\log n}{t}}\sum_{k=1}^{k_0} \left\Vert \Delta_{k-1}\right\Vert_\infty \nonumber \\
 & \overset{(c)}{\le}  C \sqrt{\frac{k_0 n\log n}{p_0 (1-2\tau)}} \left(\frac{1}{\Kmin} + \llambda \sqrt{\frac{p_0 n\log n}{\Kmin^2}}\right) \nonumber
  \le \frac{1}{8}, \nonumber
\end{align}
where in (a) we use~\eqref{eq:SumStepError} and the fact that $\Delta_{k}\in\T$, (b) follows from Lemma~\ref{lem:OP_INF}, and (c) follows from~\eqref{eq:sum_delta_inf}.
This completes the proof of Theorem~\ref{thm:partial}.

\section{Experimental Results}
\label{sec:exper}

We explore via simulation the performance of our algorithm as a function of the values of the model parameters $ (n,\Kmin,p_0, \tau) $. We see that the performance matches well with the theory. 

In the experiment, each test case is constructed by generating a graph with $ n $ nodes divided into clusters of equal size~$\Kmin$, and then placing a disagreement on each pair of node with probability~$\tau$ independently. Each node pair is then observed with probability $p_0$. We then run Algorithm~\ref{alg1}, where the optimization problem~\eqref{eq:partial_conv_prog} is solved using the fast algorithm in \citet{LinALM}.. We check if the algorithm successfully outputs a solution that equals to the underlying true clusters.
In the first set of experiments, we fix $ \tau = 0.2 $ and $\Kmin=n/4$ and vary $p_0$ and $ n $.  For each $ (p_0,n) $, we repeat the experiment for $5$ times and plot the probability of success in the left pane of Figure~\ref{fig:rho_n}. 

\begin{figure}[]
\centering
\includegraphics[width=0.5\columnwidth]{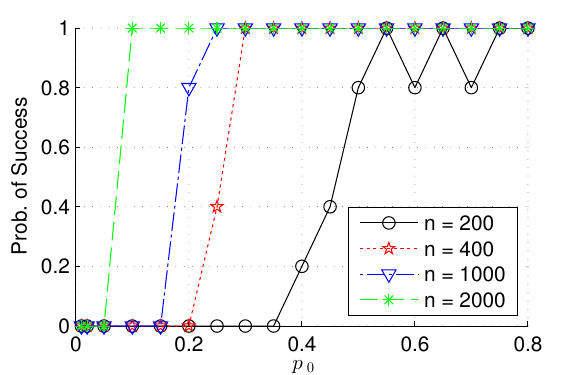}\includegraphics[width=0.5\columnwidth]{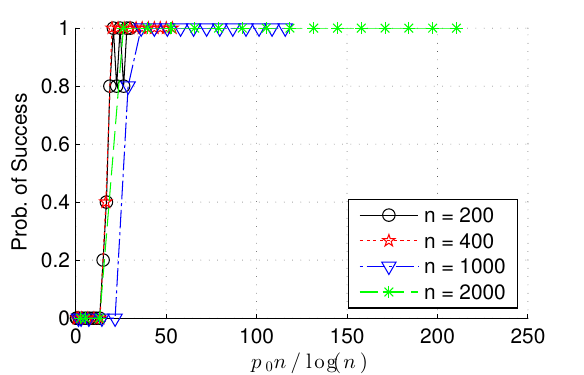}
\caption{\small Simulation results verifying the performance of our algorithm as a function of the observation probability $ p_0 $ and the graph size $ n $. The left pane shows the probability of successful recovery under different $ p_0 $ and $ n $ with fixed $ \tau=0.2 $ and $ \Kmin = n/4 $; each point is an average over $ 5 $ trials. After proper rescaling of the x-axis, the curves align as shown in the right pane, indicating a good match with our theoretical results. \label{fig:rho_n}}
\end{figure}

One observes that our algorithm has better performance with larger $p_0$ and $ n $, and the success probability exhibits a phase transition. Theorem \ref{thm:partial} predicts that, with $ \tau $ fixed and $ \Kmin = n/4 $, the transition occurs at $ p_0 \propto \frac{n\log^2 n}{\Kmin^2} \propto \frac{\log^2 n}{n}$; in particular, if we plot the success probability against the control parameter $ \frac{p_0 n}{\log^2 n} $, all curves should align with each other. Indeed, this is precisely what we see in the right pane of Figure~\ref{fig:rho_n} where we use $\frac{p_0 n}{\log n} $ as the control parameter. This shows that Theorem \ref{thm:partial} gives the correct scaling between $ p_0 $ and $ n $ up to an extra log factor.

In a similar fashion, we run another three sets of experiments with the following settings: (1) $ n=1000$ and $\tau=0.2 $ with varying $ (p_0, \Kmin)$; (2) $ \Kmin = n/4 $ and $ p_0 =0.2 $ with varying $(\tau, n)$; (3) $ n=1000 $ and $ p_0 = 0.6 $ with varying $ (\tau, \Kmin) $.  The results are shown in Figures~\ref{fig:rho_K}, \ref{fig:tau_n} and~\ref{fig:tau_K}; note that each $ x $-axis corresponds to a control parameter chosen according to the scaling predicted by Theorem \ref{thm:partial}. Again we observe that all the curves roughly align, indicating a good match with the theory. In particular, by comparing Figures~\ref{fig:rho_n} and~\ref{fig:tau_n} (or Figures~\ref{fig:rho_K} and~\ref{fig:tau_K}), one verifies the quadratic tradeoff between observations and disagreements (i.e., $ p_0 $ vs. $ 1-2\tau $) as predicted by Theorem~\ref{thm:partial}.

\begin{figure}[t]
\centering
\includegraphics[width=0.5\columnwidth]{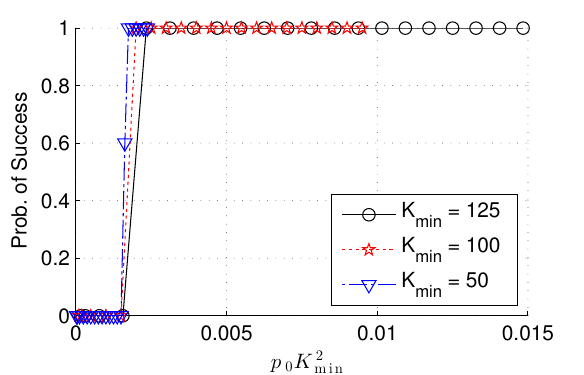}
\caption{\small Simulation results verifying the performance of our algorithm as a function of the observation probability $ p_0 $ and the cluster size $ \Kmin $, with $ n=1000$ and $ \tau=0.2$ fixed.\label{fig:rho_K}}
\end{figure}

\begin{figure}[t]
\centering
\includegraphics[width=0.5\columnwidth]{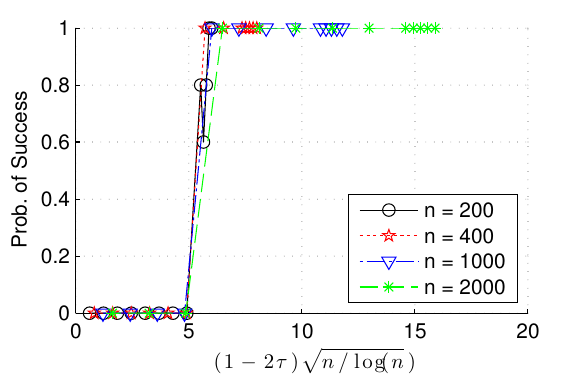}
\caption{\small Simulation results verifying the performance of our algorithm as a function of the disagreement probability $ \tau $ and the graph size $ n $, with $ p_0=0.2$ and $ \Kmin=n/4$ fixed.\label{fig:tau_n}}
\end{figure}

\begin{figure}[t]
\centering
\includegraphics[width=0.5\columnwidth]{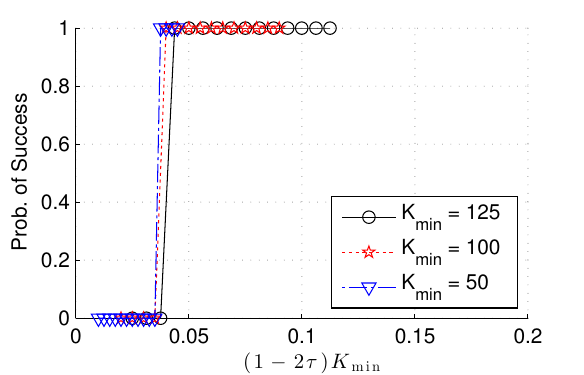}
\caption{\small Simulation results verifying the performance of our algorithm as a function of the disagreement probability $ \tau $ and the cluster size $ \Kmin $, with $ n=1000$ and $ p_0=0.6$ fixed.\label{fig:tau_K}}
\end{figure}

Finally, we compare the performance of our method with spectral clustering, a popular method for graph clustering. For spectral clustering, we first impute the missing entries of the adjacency matrix with either zeros or random $ 1/0 $'s. We then compute the first $ k $ principal components of the adjacency matrix, and run $ k $-means clustering on the principal components \citep{von2007spectral}; here we set $ k $ equal to the  number of clusters. The adjacency matrix is generated in the same fashion as before using the parameters $ n=2000 $, $ \Kmin=200 $ and $ \tau=0.1 $. We vary the observation probability $ p_0 $ and plot the success probability in Figure \ref{fig:compare}. It can be observed that our method outperforms spectral clustering with both imputation schemes; in particular, it requires fewer observations.

\begin{figure}[t]
\centering
\includegraphics[width=0.5\columnwidth]{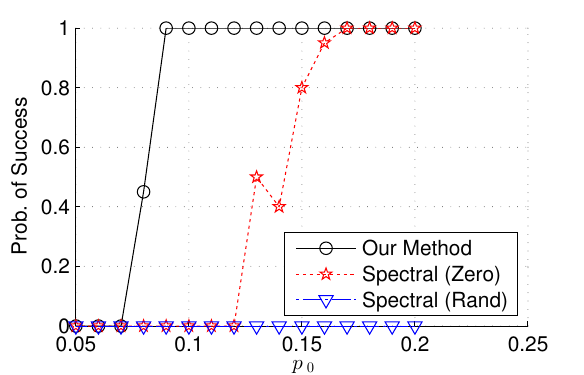}
\caption{\small Comparison of our method and spectral clustering under different observation probabilities $ p_0 $, with $ n=2000 $, $ \Kmin=200 $ and $ \tau=0.1 $. For spectral clustering, two imputation schemes are considered: (a) Spectral (Zero), where the missing entries are imputed with zeros, and (b) Spectral (Rand), where they are imputed with $ 0/1 $ random variables with symmetric probabilities. The result shows that our method recovers the underlying clusters with fewer observations. \label{fig:compare}}
\end{figure}

\section{Conclusion}\label{sec:conclusion}

We proposed a convex optimization formulation, based on a reduction to decomposing low-rank and sparse matrices, to address the problem of clustering partially observed graphs. We showed that under a wide range of parameters of the planted partition model with partial observations, our method is guaranteed to find the optimal (disagreement-minimizing) clustering. In particular, our method succeeds under higher levels of noise and/or missing observations than existing methods in this setting. The effectiveness of the proposed method and the scaling of the theoretical results are validated by simulation studies. 

This work is motivated by graph clustering applications where obtaining similarity data is expensive and it is desirable to use as few observations as possible. As such, potential directions for future work include considering different sampling schemes such as active sampling, as well as dealing with sparse graphs with very few connections.

\acks{S. Sanghavi would like to acknowledge DTRA grant HDTRA1-13-1-0024 and NSF grants 1302435, 1320175 and 0954059. H. Xu is partially supported by the Ministry of Education of Singapore through AcRF Tier Two grant R-265-000-443-112. The authors are grateful to the anonymous reviewers for their thorough reviews of this work and valuable suggestions on improving the manuscript.}

\appendix

\newcommand{\Gammad}{\Gamma_{\textrm{d}}}
\newcommand{\Gammar}{\Gamma_{\textrm{r}}}
\newcommand{\Omegad}{\Omega_{\textrm{d}}}
\newcommand{\Omegar}{\Omega_{\textrm{r}}}

\newcommand{\Omegatwo}{\Omega^{(2)}}
\newcommand{\Omegako}{\Omega^{(k_0)}}

\section{Technical Lemmas}
In this section, we provide several auxiliary lemmas required in the proof of Theorem \ref{thm:partial}. We will make use of the non-commutative Bernstein
inequality. The following version is given by \citet{Tropp}. 

\begin{lemma}
[\citealp{Tropp}]\label{lem:bernstein} Consider a finite
sequence $\{\M_{i}\}$ of independent, random $n\times n$
matrices that satisfy the assumption $\mathbb{E} \M_{i}=0$ and
$\left\Vert \M_{i}\right\Vert \le D$ almost surely. Let
\[
\sigma^{2}=\max\left\{ \left\Vert
\sum_{i}\mathbb{E} \left[\M_{i}\M_{i}^{\top}\right]\right\Vert
,  \left\Vert
\sum_{i}\mathbb{E} \left[\M_{i}^{\top}\M_{i}\right]\right\Vert
\right\}.
\]
Then for all $\theta>0$ we have
\begin{align}
\mathbb{P}\left[\left\Vert \sum \M_{i}\right\Vert \ge \theta\right]
 & \le   2n\exp\left(-\frac{\theta^{2}}{2\sigma^{2}+2D\theta/3}\right). \nonumber \\
 & \le  \begin{cases}
            2n\exp\left(-\frac{3\theta^{2}}{8\sigma^{2}}\right), & \quad\textrm{for }\theta\le\frac{\sigma^{2}}{D};\\
            2n\exp\left(-\frac{3\theta}{8D}\right), & \quad\textrm{for
            }\theta\ge\frac{\sigma^{2}}{D}.
         \end{cases}
\label{eq:bernstein}
\end{align}
\end{lemma}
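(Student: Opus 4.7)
The plan is to prove the matrix Bernstein bound via the matrix Laplace transform, a non-commutative analogue of the scalar Chernoff method, with Lieb's concavity theorem supplying the crucial subadditivity of matrix cumulant generating functions. First I would reduce to the self-adjoint case by Hermitian dilation: replace each $\M_i \in \R^{n_1\times n_2}$ by
\[
\tilde{\M}_i := \begin{pmatrix} 0 & \M_i \\ \M_i^{\top} & 0 \end{pmatrix},
\]
which is self-adjoint of size $d := n_1+n_2$, preserves spectral norm, is mean-zero, and satisfies $\tilde{\M}_i^2 = \mathrm{diag}(\M_i\M_i^{\top},\, \M_i^{\top}\M_i)$, so that $\|\sum_i \mathbb{E}[\tilde{\M}_i^2]\| = \sigma^2$. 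Hence it suffices to bound $\mathbb{P}(\lambda_{\max}(\pm\sum_i \M_i) \ge t)$ for self-adjoint, mean-zero summands bounded in spectral norm by $D$.

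Next I would apply the matrix Laplace transform. Writing $S := \sum_i \M_i$ and using the bound $e^{\theta\lambda_{\max}(S)} = \lambda_{\max}(e^{\theta S}) \le \mathrm{tr}(e^{\theta S})$, Markov's inequality gives, for every $\theta > 0$,
\[
\mathbb{P}(\lambda_{\max}(S) \ge t) \;\le\; e^{-\theta t}\,\mathbb{E}[\mathrm{tr}\, e^{\theta S}].
\]
The central non-commutative step is to invoke Lieb's concavity theorem, iterated over the independent summands, to obtain the operator CGF subadditivity
\[
\mathbb{E}[\mathrm{tr}\, e^{\theta S}] \;\le\; \mathrm{tr}\,\exp\!\Big(\sum_i \log \mathbb{E}[e^{\theta \M_i}]\Big).
\]
This is what rescues the Chernoff argument in the non-commutative setting, since $e^{A+B}\ne e^{A}e^{B}$ in general.

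A standard Taylor-style argument on each summand, using $e^{x} \le 1 + x + \frac{x^{2}/2}{1 - x/3}$ for $x < 3$ together with operator monotonicity of $\log$, then yields $\log \mathbb{E}[e^{\theta \M_i}] \preceq g(\theta)\,\mathbb{E}[\M_i^2]$ with $g(\theta) = \frac{\theta^2/2}{1 - \theta D/3}$ for $0 < \theta < 3/D$. Summing, applying $\mathrm{tr}\exp$ in the Loewner order, and invoking the definition of $\sigma^2$ produces $\mathbb{E}[\mathrm{tr}\,e^{\theta S}] \le d\, e^{g(\theta)\sigma^2}$. Optimizing $\theta = t/(\sigma^{2} + Dt/3)$ in the resulting bound $e^{-\theta t}\, d\, e^{g(\theta)\sigma^{2}}$ yields $\mathbb{P}(\lambda_{\max}(S) \ge t) \le d\exp\!\big(-t^{2}/(2\sigma^{2} + 2Dt/3)\big)$; applying the same argument to $-S$ and a union bound recovers the first displayed inequality of the lemma, and the two-case simplification follows by checking whether $\sigma^{2}$ or $Dt/3$ dominates in the denominator.

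The main obstacle is the Lieb step: establishing the operator inequality $\mathbb{E}[\mathrm{tr}\, e^{\theta S}] \le \mathrm{tr}\,\exp(\sum_i \log \mathbb{E}[e^{\theta \M_i}])$ for independent matrix summands is the genuinely non-commutative ingredient, without which one is forced into the weaker Ahlswede--Winter bound with a suboptimal variance factor. Every other step is a scalar-style Chernoff calculation lifted to matrices through operator monotonicity and the Loewner order.
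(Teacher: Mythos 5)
The paper does not prove this lemma at all: it is quoted verbatim as a known result, attributed to the matrix concentration literature (first proved by Gross and Recht via the Ahlswede--Winter route and later sharpened by Tropp), so there is no in-paper argument to compare against. Your proposal correctly reconstructs the sharpened proof from the cited source: Hermitian dilation to reduce the rectangular case to a self-adjoint one while matching both the bound $D$ and the variance proxy $\sigma^2$, the matrix Laplace transform bound $\mathbb{P}(\lambda_{\max}(S)\ge t)\le e^{-\theta t}\,\mathbb{E}\,\mathrm{tr}\,e^{\theta S}$, Lieb's concavity theorem giving subadditivity of the matrix cumulant generating function (this is indeed the step that beats Ahlswede--Winter's $\sum_i\|\mathbb{E}\M_i^2\|$ variance factor and delivers the $\sigma^2$ stated in the lemma), the Taylor-type majorization $\log\mathbb{E}e^{\theta\M_i}\preceq g(\theta)\,\mathbb{E}\M_i^2$ with $g(\theta)=\frac{\theta^2/2}{1-\theta D/3}$, and the optimization $\theta=t/(\sigma^2+Dt/3)$; the two-case simplification by checking which of $\sigma^2$ and $Dt/3$ dominates is also right.

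One small correction: after dilation the final union bound over $\pm S$ is unnecessary, and taking it literally would degrade the prefactor to $2(n_1+n_2)$, which is weaker than the stated bound. The spectrum of the dilation $\tilde S=\bigl(\begin{smallmatrix}0 & S\\ S^{\top} & 0\end{smallmatrix}\bigr)$ is symmetric about zero, so $\lambda_{\max}(\tilde S)=\|S\|=\|\sum_i\M_i\|$ already; a single one-sided tail bound for $\lambda_{\max}(\tilde S)$ in dimension $d=n_1+n_2$ yields exactly the prefactor $(n_1+n_2)$ in the lemma. With that adjustment your argument is complete and matches the result the paper imports.
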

\begin{remark}
When $n=1$, this becomes the standard two-sided Bernstein inequality.
\end{remark}

We will also make use of the following estimate, which follows from
the structure of $\U$.
\begin{equation*}
 \left\Vert \PP_{\T}(e_{i}e_{j}^{\top})\right\Vert _{F}^{2}
 = \left\Vert \U\U^T e_i \right\Vert^2 + \left\Vert \U\U^T e_j \right\Vert^2 - \left\Vert \U\U^T e_i
 \right\Vert^2 \left\Vert \U\U^T e_j \right\Vert^2
 \le  \frac{2n}{\Kmin^{2}},\quad\forall 1 \le i,j \le n.
\end{equation*}

The first auxiliary lemma controls the operator norm of certain random operators. A similar result was given in~\citet[][Theorem 4.1]{candeswrightma}. Our proof is different from theirs.
\begin{lemma}
Suppose $\Omega_0$ is a set of entries obeying $\Omega_0\sim\textrm{Ber}_1(\rho)$. Consider the operator $P_\T - P_\T \RR_{\Omega_0} P_\T$. For some constant $C_0>0$, we have
  \begin{equation*}
      \left\| \PP_\T - \PP_\T \RR_{\Omega_0} \PP_\T \right\| < \epsilon_1
  \end{equation*}
with high probability provided that $\rho \ge C_0\frac{n \log
n}{\epsilon_1^2 \Kmin^2}$ and $\epsilon_1 \le 1$.
  \label{lem:popr}
  \label{lem:OP}
\end{lemma}

\begin{proof}
For each $(i,j)$, define the indicator random variable $\delta_{ij}=\mathbf{1}_{\{(i,j)\in\Omega_0\}}$. We observe that for any matrix $\M\in \T$,
\begin{align*}
 \left(\PP_{\T}\RR_{\Omega_0}\PP_{\T}-\PP_{\T}\right) \M
 & =\sum_{1\le i<j\le n}\mathcal{S}_{ij}(\M)\\
 & \triangleq \sum_{1\le i<j\le n}\left(\rho^{-1}\delta_{ij}-1\right)\left\langle \PP_{\T}(e_{i}e_{j}^{\top}),\; \M\right\rangle \PP_{\T}(e_{i}e_{j}^{\top}+e_{j}e_{i}^{\top}).
\end{align*}
Here $\mathcal{S}_{ij}: \mathbb{R}^{n\times
n}\mapsto\mathbb{R}^{n\times n}$ is a linear self-adjoint operator
with $\mathbb{E}\left[\mathcal{S}_{ij}\right]=0$. Using the fact that $\PP_{\T} (e_i e_j^{\top}) = \left( \PP_{\T} (e_j
e_i^{\top}) \right) ^{\top}$ and $\M$ is symmetric, we obtain the
bounds
\begin{align*}
\left\Vert \mathcal{S}_{ij}\right\Vert  
 & \le  \rho^{-1}\left\Vert \PP_{\T}(e_{i}e_{j}^{\top})\right\Vert _{F}\left\Vert \PP_{\T}(e_{i}e_{j}^{\top}+e_{j}e_{i}^{\top})\right\Vert _{F}\\
 & \le  \rho^{-1}\cdot2\left\Vert \PP_{\T}(e_{i}e_{j}^{\top})\right\Vert _{F}^{2}\le\frac{4n}{\Kmin^{2}\rho},
\end{align*}
and
\begin{align*}
   & \left\Vert \mathbb{E}\left[\sum_{1\le i<j\le n}\mathcal{S}_{ij}^{2}(\M)\right]\right\Vert _{F}\\
= & \left\Vert \sum_{1\le i<j\le n}\mathbb{E}\left[(\rho^{-1}\delta_{ij}^{(k)}-1)^2\right]\left\langle \PP_{\T}(e_{i}e_{j}^{\top}),\; \M\right\rangle\left\langle \PP_{\T}(e_{i}e_{j}^{\top}+e_{j}e_{i}^{\top}),\; e_{i}e_{j}^{\top}\right\rangle \PP_{\T}(e_{i}e_{j}^{\top}+e_{j}e_{i}^{\top})\right\Vert _{F}\\
= & \left(\rho^{-1}-1\right)\left\Vert \sum_{1\le i<j\le n} 2\left\Vert \PP_{\T}(e_{i}e_{j}^{\top})\right\Vert _{F}^{2} m_{i,j} \PP_{\T}(e_{i}e_{j}^{\top}+e_{j}e_{i}^{\top})\right\Vert _{F}\\
\le & \left(\rho^{-1}-1\right)\left\Vert \sum_{1\le i<j\le n} 2\left\Vert \PP_{\T}(e_{i}e_{j}^{\top})\right\Vert _{F}^{2} m_{i,j} (e_{i}e_{j}^{\top}+e_{j}e_{i}^{\top})\right\Vert _{F}\\
\le & \left(\rho^{-1}-1\right)\frac{4n}{\Kmin^{2}}\left\Vert \sum_{1\le i<j\le n} m_{i,j} (e_{i}e_{j}^{\top}+e_{j}e_{i}^{\top})\right\Vert _{F}\\
= & \left(\rho^{-1}-1\right)\frac{4n}{\Kmin^{2}}\left\Vert \M\right\Vert _{F},
\end{align*}
which means $\left\Vert \mathbb{E}\left[\sum_{1\le i<j\le
n}\mathcal{S}_{ij}^{2}\right]\right\Vert \le\frac{4n}{\Kmin^{2}\rho}$. Applying the first inequality in the Bernstein inequality~\eqref{eq:bernstein} gives
\[
\mathbb{P}\left[\left\Vert \textstyle{\sum_{1\le i<j\le
n}\mathcal{S}_{ij}} \right\Vert
\ge\epsilon_{1}\right]\le2n^{2-2\beta}
\]
provided $\rho\ge\frac{64\beta n\log n} {3\Kmin^{2}\epsilon_{1}^{2}}$ and $\epsilon_{1}<1$.
\end{proof}

The next lemma bounds the spectral norm of certain symmetric random matrices. A related result for non-symmetric matrices appeared in~\citet[][Theorem 6.3]{candesrecht}.
\begin{lemma}
Suppose $\Omega_0$ is a set of entries obeying $\Omega_0\sim\textrm{Ber}_1(\rho)$, and $\M$ is a fixed $n\times n$ symmetric matrix. Then for some constant $C_0>0$, we have
\begin{equation*}
      \left\| (\II - \RR_{\Omega_0}) \M \right\| < \sqrt{ C_0 \frac{n\log n} {\rho}
      } \| \M \|_\infty,
\end{equation*}
with high probability provided that $\rho \ge C_0\frac{\log
n}{n}$.
  \label{lem:popinf}
  \label{lem:OP_INF}
\end{lemma}

\begin{proof}
Define $\delta_{ij}$ as before. Notice that
\begin{equation*}
 \RR_{\Omega_0}(\M)-\M
 = \sum_{i<j}S_{ij}
 \triangleq \sum_{i<j}(\rho^{-1}\delta_{ij}-1)m_{i,j}\left(e_{i}e_{j}^{\top}+e_{j}e_{i}^{\top}\right).
\end{equation*}
Here the symmetric matrix $S_{ij}\in\mathbb{R}^{n\times n}$
satisfies $\mathbb{E}\left[S_{ij}\right]=0$, $\left\Vert
S{}_{ij}\right\Vert \le 2\rho^{-1}\left\Vert \M\right\Vert _{\infty}$
and the bound
\begin{align*}
\left\Vert \mathbb{E}\left[ \textstyle{\sum_{i<j}S_{ij}^2}\right]\right\Vert  
& =  \left(\rho^{-1}-1\right)\left\Vert \sum_{i<j}m_{i,j}^{2}\left(e_{i}e_{i}^{\top}+e_{j}e_{j}^{\top}\right)\right\Vert \\
 & \le  \left(\rho^{-1}-1\right)\left\Vert \textrm{diag}\left(\sum_{j}m_{1,j}^{2},\ldots,\sum_{j}m_{n,j}^{2}\right)\right\Vert \\
 & \le  \left(\rho^{-1}-1\right)n\left\Vert \M\right\Vert_{\infty}^{2}\le 2\rho^{-1}n\left\Vert \M\right\Vert _{\infty}^{2}.
\end{align*}
When $\rho\ge\frac{16\beta\log n}{3n}$, we apply the first inequality in the
Bernstein inequality~\eqref{eq:bernstein} to obtain
\begin{align*}
 \mathbb{P}\left[\left\Vert \textstyle{\sum_{i<j}S_{ij}}\right\Vert \ge \sqrt{ \frac{16 \beta n \log n}{3\rho}} \left\Vert \M\right\Vert _{\infty}\right]
 & \le  2n\exp\left(-\frac{ 3\cdot \frac{16 \beta n\log n}{3\rho} \left\Vert \M\right\Vert _{\infty}^{2}}{8 \cdot \frac{2n}{\rho}\left\Vert \M\right\Vert _{\infty}^{2}}\right)
  \le  2n^{1-\beta}.
\end{align*}
The conclusion follows by choosing a sufficiently large constant $\beta$.
\end{proof}

The third lemma bounds the infinity norm of certain random symmetric matrices. A related result is given in~\citet[][Lemma 3.1]{candeswrightma}.
\begin{lemma}
Suppose $\Omega_0$ is a set of entries obeying $\Omega_0\sim\textrm{Ber}_1(\rho)$, and $\M \in \T$ is a fixed symmetric $n\times n$ matrix. Then for some constant $C_0>0$, we have
  \begin{equation*}
      \| (\PP_{\T} - \PP_{\T} \RR_{\Omega_0} \PP_\T) \M \|_\infty
      < \epsilon_3 \| \M \|_\infty,
  \end{equation*}
with high probability provided that $\rho \ge C_0 \frac{n \log
n}{\epsilon_3^2 \Kmin^2}$ and $\epsilon_3 \le 1$.
  \label{lem:pinf}
  \label{lem:INF}
\end{lemma}

\begin{proof}
Define $\delta_{ij}$ as before. Fix an entry index $(a,b)$. Notice that
\begin{align*}
  \left( \PP_{\T} \RR_{\Omega_0} \PP_\T \M - \PP_{\T} \M\right)_{a,b}
= \sum_{i<j}\xi_{ij}
\triangleq \sum_{i<j}\left\langle (\rho^{-1}\delta_{ij}^{(k)}-1)m_{i,j}\PP_{\T}\left(e_{i}e_{j}^{\top}+e_{j}e_{i}^{\top}\right),\; e_{a}e_{b}^{\top}\right\rangle.
\end{align*}
The random variable $ \xi_{ij} $ satisfies $\mathbb{E}\left[\xi_{ij}\right]=0$ and obeys the bounds
\begin{align*}
 \left|\xi_{ij}\right| 
 \le  2p^{-1}\left\Vert \PP_{\T}(e_{i}e_{j}^{\top})\right\Vert _{F}\left\Vert \PP_{\T}(e_{a}e_{b}^{\top})\right\Vert _{F}\left|m_{i,j}\right|
 \le \frac{4n}{\Kmin^{2}\rho}\left\Vert \M\right\Vert_{\infty}
\end{align*}
and
\begin{align*}
\left|\mathbb{E}\left[\sum_{i<j}\xi_{ij}^{2}\right]\right|
 & =  \left|\sum_{i<j} \mathbb{E}\left[(\rho^{-1}\delta_{ij}^{(k)}-1)^{2}\right] m_{i,j}^{2} \left\langle \PP_{\T}\left(e_{i}e_{j}^{\top}+e_{j}e_{i}^{\top}\right),\; e_{a}e_{b}^{\top}\right\rangle ^{2}\right|\\
 & \le  \left(\rho^{-1}-1\right)\left\Vert \M \right\Vert _{\infty}^{2} \sum_{i<j}\left\langle e_{i}e_{j}^{\top}+e_{j}e_{i}^{\top},\; \PP_{\T}(e_{a}e_{b}^{\top})\right\rangle ^{2}\\
 & \le  2\left(\rho^{-1}-1\right)\left\Vert \M\right\Vert _{\infty}^{2}\left\Vert \PP_{\T}(e_{a}e_{b}^{\top})\right\Vert _{F}^{2}\\
 & \le  2\left(\rho^{-1}-1\right)\frac{2n}{\Kmin^{2}}\left\Vert \M\right\Vert _{\infty}^{2}\\
 & \le  \frac{4n}{\Kmin^{2}\rho}\left\Vert \M\right\Vert_{\infty}^{2}.
\end{align*}
When $\rho\ge\frac{64\beta n\log n}{3\Kmin^{2}\epsilon_{3}^{2}}$ and
$\epsilon_3 \le 1$, we apply the first inequality in the Bernstein inequality~\eqref{eq:bernstein} with $ n=1 $ to obtain
\begin{align*}
  \mathbb{P}\left[\left|\left(\PP_{\T} \RR_{\Omega_0} \PP_\T \M - \PP_{\T} \M\right)_{a,b}\right|\ge\epsilon_{3}\left\Vert \M\right\Vert _{\infty}\right] 
 \le 2\exp\left(-\frac{3\epsilon_{3}^{2}\left\Vert \M\right\Vert _{\infty}^{2}}{8\frac{4n}{\Kmin^{2}\rho}\left\Vert \M\right\Vert _{\infty}^{2}}\right)
 \le 2n^{-2\beta}.
\end{align*}
Applying the union bound then yields
\begin{align*} \mathbb{P}\left[\left\Vert
\PP_{\T} \RR_{\Omega_0} \PP_\T \M - \PP_{\T} \M\right\Vert
_{\infty}\ge\epsilon_{3}\left\Vert \M\right\Vert _{\infty}\right] 
\le  2n^{2-2\beta}.
\end{align*}
\end{proof}

The last lemma bounds the matrix infinity norm of $\PP_{\T}
\EE$ for a $ \pm 1 $ random matrix $ \S$. 

\begin{lemma}
Suppose $ \Omega \sim \textrm{Ber}_0\left(\frac{2p_0\tau}{1-p_0 + 2p_0 \tau}\right) $ and $\S \in \mathbb{R}^{n\times n}$ has i.i.d. symmetric $\pm 1$ entries . Under the assumption of Theorem \ref{thm:partial}, for some constant $C_0$, we have with high probability
  \begin{equation*}
      \left\| \PP_{\T} \EE \right\|_\infty \le  C_0\sqrt{\frac{ p_0 n \log n}{\Kmin^2}}.
  \end{equation*}
  \label{lem:pte}
  \label{lem:PTE_inf}
\end{lemma}

\begin{proof}
By triangle inequality, we have 
\begin{equation}
\left\Vert \PP_{\T} \EE\right\Vert _{\infty} 
\le \left\Vert \U\U^{T}\EE\right\Vert _{\infty}
 +\left\Vert \EE \U\U^{T}\right\Vert _{\infty}
  +\left\Vert \U\U^{T} \EE \U\U^{T}\right\Vert _{\infty},
\nonumber
\end{equation}
so it suffices to show that each of these three terms are bounded by $ C \sqrt{\frac{p_0 n \log n}{\Kmin^2}} $ w.h.p. 
for some constant $C $. Under the assumption on $ \Omega $ and $ \S $ in the lemma statement, each pair of symmetric entries of $\EE$  equals $\pm 1$ 
with probability $\rho \triangleq \frac{p_0\tau}{1-p_0 + 2p_0 \tau}$ and $0$ otherwise; notice that $ \rho \le \frac{p_0}{2} $ since $ \tau \le \frac{1}{2} $. 
Let  $\left( s^{(i)} \right)^T$ be the $i$th row of $\U\U^{T}$. From the structure of $\U$, we know that for all $ i $ and $ j $,
\begin{align*}
\left|s_{j}^{(i)}\right|  \le
\frac{1}{\Kmin},
\end{align*}
and for all $ i $,
\begin{align*}
\sum_{j=1}^{n}\left(s_{j}^{(i)}\right)^{2}  \le  \frac{1}{\Kmin}.
\end{align*}
We now bound $\left\Vert \U\U^{T}\EE\right\Vert _{\infty}$. For
simplicity, we focus on the $(1,1)$ entry of $\U\U^{T}\EE$ and
denote this random variable as $X$. We may write $ X $ as
$X=\sum_{i}s_{i}^{(1)} \left( \EE \right)_{i,1}$ , for which we have
\begin{align*}
\mathbb{E}\left[s_{i}^{(1)}\left(\EE \right)_{i,1}\right] & =  0,\\
\left|s_{i}^{(1)}\left(\EE \right)_{i,1}\right| 
  & \le \left|s_{i}^{(1)}\right|\le \frac{1}{\Kmin},\quad a.s.\\
\textrm{Var}\left(X\right) 
  & =  \sum_{i:(i,1)\in\Omega}(s_{i}^{(1)})^{2} \cdot 2\rho \le \frac{p_{0}}{\Kmin}.
\end{align*}
Applying the standard Bernstein inequality then gives
\begin{align*}
\mathbb{P}\left[\left|X\right| > C \sqrt{\frac{p_0 n \log n}{ \Kmin^2}}\right]
  & \le  2\exp\left[ -\left( C^2 \frac{p_0 n\log n}{\Kmin^2}\right) / \left( 2\frac{p_0 }{\Kmin} +  \frac{2C \sqrt{p_0 n \log n}}{3\Kmin^2}\right) \right].
\end{align*}
Under the assumption of Theorem~\ref{thm:partial}, the right hand side above is bounded by $ 2n^{-12} $. It follows from the union bound that
$\left\Vert \U\U^{T}\EE\right\Vert _{\infty}\le  C \sqrt{\frac{p_0 n \log n}{\Kmin^2}}$ w.h.p. Clearly, the same bound holds for
$\left\Vert \EE \U\U^{T}\right\Vert _{\infty}$. Finally, let $ K $ be the size of the cluster that node $ j $ is in. Observe that due to the structure of $ \U \U^\top $, we have
\begin{align*}
\left(\U\U^{T} \EE \U\U^{T}\right)_{i,j}
 & =  \sum_{l} \left( \U\U^\top \PP_\Omega(\S) \right)_{i,l} \left(\U\U^\top\right)_{l,j} \le \frac{1}{K} \cdot K \cdot \left\Vert\U\U^\top \PP_\Omega(\S) \right\Vert_\infty,
\end{align*}
which implies $ \left\Vert \U\U^{T} \EE \U\U^{T}\right\Vert_{\infty} \le \left\Vert\U\U^\top \PP_\Omega(\S) \right\Vert_\infty.  $ This completes the proof of the lemma.
\end{proof}

\section{Proof of Theorem \ref{thm:lowerbounds}}\label{sec:proof_lower}

We use a standard information theoretical argument, which improves upon a related proof by \citet{chaudhuri}. Let $ K $ be the size of the clusters (which are assumed to have equal size). For simplicity we assume $ n/K $ is an integer. Let $ \mathcal{F} $ be the set of all possible partition of $ n $ nodes into $ n/K $ clusters of equal size $ K $. Using Stirling's approximation, we have
$$
M\triangleq |\mathcal{F}| = \frac{1}{(n/K)!}{n \choose K}{n-K \choose K}\cdots{K \choose K} \ge \left(\frac{n}{3K}\right)^{n(1-\frac{1}{K})} \ge c_1^{\frac{1}{2} n},
$$
which holds for $ K=\Theta(n) $.

Suppose the clustering $ \Y $ is chosen uniformly at random from $ \mathcal{F} $, and the graph $ \A $ is generated from $ \Y $ according to the planted partition model with partial observations, where we use $ a_{ij} = ? $ for unobserved pairs. We use $ \mathbb{P}_{\A|\Y} $ to denote the distribution of $ \A $ given $ \Y $. Let $ \hat{\Y} $ be any measurable function of the observation $ \A $. A standard application of Fano's inequality and the convexity of the mutual information \citep{yang1999minimax} gives
\begin{align}
\label{eq:fano}
\sup_{Y \in \mathcal{F}}\mathbb{P}\left[\hat{\Y}\neq \Y \vert \Y\right] \ge 1 - \frac{M^{-2}\sum_{\Y^{(1)},\Y^{(2)}\in \mathcal{F}} D\left(\mathbb{P}_{\A|\Y^{(1)}}\Vert\mathbb{P}_{\A|\Y^{(2)}}\right)+\log 2}{\log M},
\end{align} 
where $ D(\cdot\Vert\cdot) $ denotes the KL-divergence. We now upper bound this divergence. Given $ \Y^{(l)} $, $ l=1,2 $, the $ a_{i,j} $'s are independent of each other, so we have
$$
D\left(\mathbb{P}_{\A|\Y^{(1)}}\Vert\mathbb{P}_{\A|\Y^{(2)}}\right) = \sum_{i,j}D\left(\mathbb{P}_{a_{i,j}|\Y^{(1)}}\Vert\mathbb{P}_{a_{i,j}|\Y^{(2)}}\right).
$$
For each pair $ (i,j) $, the KL-divergence is zero if $ y^{(1)}_{i,j} = y^{(2)}_{i,j}$, and otherwise satisfies
\begin{align*}
D\left(\mathbb{P}_{a_{i,j}|\Y^{(1)}}\Vert\mathbb{P}_{a_{i,j}|\Y^{(2)}}\right) 
& \le  p_0(1-\tau) \log \frac{p_0(1-\tau)}{p_0\tau} + p_0\tau\log \frac{p_0\tau}{p_0(1-\tau)} + (1-p_0) \log \frac{1-p_0}{1-p_0}\\
& =    p_0(1-2\tau)\log \frac{1-\tau}{\tau} \\
& \le  p_0(1-2\tau) \left(\frac{1-\tau}{\tau} - 1\right)\\
& \le  c_2 p_0(1-2\tau)^2,
\end{align*}
where $ c_2>0 $ is a universal constant and the last inequality holds under the assumption $ \tau>1/100 $. Let $ N $ be the number of pairs $ (i,j) $ such that $ y^{(1)}_{i,j} \neq y^{(2)}_{i,j}$. When $ K=\Theta(n) $, we have 
$$ N \le |\{(i,j):y^{(1)}_{i,j}=1 \} \cup \{(i,j):y^{(2)}_{i,j}=1 \} | \le n^2. $$ 
It follows that $ D\left(\mathbb{P}_{\A|\Y^{(1)}}\Vert\mathbb{P}_{\A|\Y^{(2)}}\right) \le N\cdot c_2 p_0(1-2\tau)^2 \le c_2 n^2 p_0(1-2\tau)^2. $
Combining pieces, for the left hand side of \eqref{eq:fano} to be less than $ 1/4 $, we must have 
$
p_0(1-2\tau)^2 \ge C \frac{1}{n}.
$

\vskip 0.2in
\bibliography{GraphClustering}

\end{document}